\newtheorem{thm}{Theorem}[section]
\newtheorem{lem}{Lemma}[section]
\newtheorem{theorem}{Theorem}
\newtheorem{lemma}{Lemma}
\title{COIN: Co-Cluster Infomax for Bipartite Graphs}
\author{%
  Baoyu Jing$^\dagger$, Yuchen Yan$^\dagger$, Yada Zhu$^\ddagger$ and Hanghang Tong$^\dagger$ \\
%   Department of Computer Science\\
  $^\dagger$University of Illinois at Urbana-Champaign\\
  $^\ddagger$IBM Research\\
%   Pittsburgh, PA 15213 \\
  \texttt{\{baoyuj2, yucheny5, htong\}@illinois.edu},\, \texttt{yzhu@us.ibm.com} \\
  % examples of more authors
  % \And
  % Coauthor \\
  % Affiliation \\
  % Address \\
  % \texttt{email} \\
  % \AND
  % Coauthor \\
  % Affiliation \\
  % Address \\
  % \texttt{email} \\
  % \And
  % Coauthor \\
  % Affiliation \\
  % Address \\
  % \texttt{email} \\
  % \And
  % Coauthor \\
  % Affiliation \\
  % Address \\
  % \texttt{email} \\
}
\begin{document}

\maketitle

\begin{abstract}
\pdfoutput=1

Bipartite graphs are powerful data structures to model interactions between two types of nodes, which have been used in a variety of applications, such as recommender systems, information retrieval, and drug discovery.
A fundamental challenge for bipartite graphs is how to learn informative node embeddings.
Despite the success of recent self-supervised learning methods on bipartite graphs, their objectives are discriminating instance-wise positive and negative node pairs, which could contain cluster-level errors.
In this paper, we introduce a novel \underline{co}-cluster \underline{in}fomax (COIN) framework, which captures the cluster-level information by maximizing the mutual information of co-clusters.
Different from previous infomax methods which estimate mutual information by neural networks, COIN could easily calculate mutual information.
Besides, COIN is an end-to-end co-clustering method which can be trained jointly with other objective functions and optimized via back-propagation.
Furthermore, we also provide theoretical analysis for COIN.
We theoretically prove that COIN is able to effectively increase the mutual information of node embeddings and COIN is upper-bounded by the prior distributions of nodes.
We extensively evaluate the proposed COIN framework on various benchmark datasets and tasks to demonstrate the effectiveness of COIN.
\end{abstract}

\section{Introduction}\label{sec:introduction}
\pdfoutput=1

Graphs have attracted plenty of attention in recent years \cite{ velickovic2019deep,grover2016node2vec, he2017neural, jing2021multiplex, zhu2021graph, du2021new, jing2021network, wu2021self, yan2021dynamic, yan2022dissecting}. 
The bipartite graph is a powerful representation formalism to model interactions between two types of nodes, which has been used in a variety of real-world applications.
For example, in recommender systems \cite{aggarwal2016recommender, wang2021graph},  users, items and their interactions (e.g. buy) is a natural bipartite graph;
in information retrieval \cite{beeferman2000agglomerative, he2016birank}, clickthrough between queries and webpages can be conveniently modeled by a bipartite graph;
in drug discovery \cite{pavlopoulos2018bipartite, yamanishi2008prediction}, chemical interactions (e.g. nuclear receptor) between drugs and target proteins can also be represented by a bipartite graph. 

One fundamental challenge for bipartite graphs is how to extract informative node embeddings, such that they can be easily used for downstream tasks (e.g. link prediction).
In recent years, self-supervised learning has become a popular paradigm to learn node embeddings without human labels \cite{liu2021graph, wu2021self, ding2022data, DBLP:conf/cikm/ZhouZF0H22, zheng2022contrastive}.  
Despite their success in extracting high-quality node embeddings, which have great performance on downstream tasks, most of them are designed for homogeneous graphs \cite{hassani2020contrastive, perozzi2014deepwalk, velickovic2019deep, you2020graph, zhu2021graph, feng2022adversarial, li2022graph, wang2022augmentation} and heterogeneous graphs \cite{park2020unsupervised, jing2021hdmi, DBLP:conf/cikm/FuXLTH20, jing2022x, du2021new}.
Thus they are sub-optimal to bipartite graphs \cite{cao2021bipartite, gao2018bine}.
Several methods have been specifically proposed for bipartite graphs.
For example, BiNE \cite{gao2018bine} learns embeddings by maximizing the similarity of neighbors sampled by random walks; NeuMF \cite{he2017neural} and GC-MC \cite{berg2017graph} train neural networks by reconstructing the edges; BiGI \cite{cao2021bipartite} and EGLN \cite{yang2021enhanced} further improve the quality of node embeddings by maximizing the mutual information between local and global representations.

Although promising results have been achieved by the aforementioned methods, they learn node embeddings by discriminating instance-wise positive pairs (e.g. local neighbors) and negative pairs (e.g. randomly sampled unconnected node pairs), but ignore the cluster-level information.
Such a practice restricts the quality of the learned embeddings since instance-wise negative pairs might contain cluster-level errors \cite{caron2020unsupervised, li2020prototypical, jing2022x}.
Clusters naturally exist in real-world bipartite graphs, such as the categories of items in a user-item graph and the topics of documents in a document-keywork graph.
Regardless of the cluster information, one might wrongly pair two nodes within the same cluster as a negative pair, which will lead to errors in downstream tasks \cite{jing2022x, li2020prototypical}.

Due to the duality between two types of nodes in a bipartite graph, co-clustering two types of nodes simultaneously usually yields better results than traditional clustering algorithms \cite{dhillon2003information, xu2019deep}.
In this paper, we introduce a novel \underline{co}-cluster \underline{in}fomax (COIN) framework to incorporate cluster-level information into the node embeddings of bipartite graphs.
Given a bipartite graph, COIN first uses neural networks to cluster two types of nodes into co-clusters and then maximizes the mutual information of the co-clusters.
There are two advantages of the proposed COIN framework.
Firstly, COIN directly \emph{calculates} the mutual information of the co-clusters, rather than \emph{estimating} mutual information via neural networks \cite{belghazi2018mutual} in prior works \cite{cao2021bipartite, velickovic2019deep}, which could be unreliable for complex distributions in practice \cite{song2019understanding}. 
Secondly, COIN is an end-to-end co-clustering method, which is differentiable and can be trained jointly with other objective functions.

% We also present theoretical analysis of COIN to show its theoretical properties.
We further present the theoretical analysis and empirical evaluation of COIN.
In theoretical analysis, we prove that (1) maximizing the mutual information of co-clusters will increase the mutual information of the node embeddings, and (2) the mutual information of co-clusters is upper-bounded by the prior distribution assumed over the bipartite graph.
In empirical evaluation, we extensively evaluate the proposed COIN on various public real-world benchmark datasets and downstream tasks to demonstrate the effectiveness of COIN.

The contributions of this paper are summarized as follows:
\begin{itemize}
    \item We introduce a novel framework COIN for self-supervised learning on bipartite graphs, which incorporates cluster information by maximizing the mutual information of co-clusters.
    \item We theoretically prove that COIN maximizes the mutual information between the embeddings of two types of nodes, and COIN is upper-bounded by the prior distribution.
    \item We extensively evaluate COIN on various benchmark datasets and downstream tasks, including link prediction, recommendation, and clustering, to demonstrate its effectiveness. 
\end{itemize}

\section{Preliminary}\label{sec:preliminary}
\pdfoutput=1

% In this section, we briefly introduce self-supervised learning for bipartite graphs, co-clustering, as well as mathematical notations.

\paragraph{Self-Supervised Learning for Bipartite Graphs.}
We denote a bipartite graph as $G=(U, V, E)$, where $U$ and $V$ are two disjoint sets of nodes, and $E\subseteq U\times V$ is the set of edges.
%We use $\mathbf{A}\in\{0, 1\}^{|U|\times|V|}$ to denote the binary adjacency matrix of $G$. 
Our goal is to train a graph encoder $(\mathbf{U}, \mathbf{V})=\mathcal{E}(G)$ to extract informative node embeddings $\mathbf{U}\in\mathbb{R}^{|U|\times d}$, $\mathbf{V}\in\mathbb{R}^{|V|\times d}$ from $G$, where $d$ is the size of hidden dimension. 
When there is no ambiguity, we also use $U$, $V$ as the random variables for nodes, $\mathbf{U}$, $\mathbf{V}$ as the random variables for node embeddings. 
Correspondingly, $u$, $v$ and $\mathbf{u}$, $\mathbf{v}$ are used as the indices for $U$, $V$ and $\mathbf{U}$, $\mathbf{V}$.
% Furthermore, we use $U$ and $V$ as the random variables for nodes $u\in\mathcal{U}$ and $v\in\mathcal{V}$.
% When there is no ambiguity, we also use $\mathbf{U}$ and $\mathbf{V}$ to denote the random variables for node embeddings $\mathbf{u}$ and $\mathbf{v}$.

\paragraph{Co-Clustering.}
Given a biparitite graph ${G}=({U}, {V}, {E})$, the \emph{soft} co-clustering aims to map nodes $U$ and $V$ into $N_K\ll|U|$ and $N_L\ll|V|$ clusters via the function $\phi = (\phi_U, \phi_V)$, where $\phi_U(u)\in\mathbb{R}^{N_K}$ and $\phi_V(v)\in\mathbb{R}^{N_L}$ produce the conditional probabilities of cluster assignments $p(k|u)$, $p(l|v)$ for nodes $u$, $v$.
Here $k\in[1, \cdots, N_K]$ and $l\in[1, \cdots, N_L]$ are the indices of the clusters.
% Note that in this paper, we consider \emph{soft} clustering.
%Let $K=\phi_U(U|G)$, $L=\phi_V(V|G)$ denote the random variables for clusters.
% where $\phi_U:(\mathcal{U}|\mathcal{G})\mapsto\{1, \cdots, M\}$ and $\phi_V:(\mathcal{V}|\mathcal{G})\mapsto\{1, \cdots, N\}$.
% When there is no ambiguity, we drop the condition ${G}$.
Furthermore, we use $K$ and $L$ to denote random variables of co-clusters. 
%\hh{better to explain the indices: e.g., u and v are indices of rows and columns (type-1 nodes and type-2 nodes), k and l are indices of row clusters and column clusters}
% We use $K=\phi_U(U)$ and $L=\phi_V(V)$ to denote the random variables of co-clusters, where $U$ and $V$ are random variables of nodes.

% \paragraph{Triplet Loss}
% The triplet loss \cite{schroff2015facenet} is a popular contrastive loss \cite{le2020contrastive}, which is also known as margin-based ranking loss \cite{bordes2013translating} and hinge loss \cite{chechik2010large}.
% We denote the similarity function as $f_S:(\mathbb{R}^d,\mathbb{R}^d)\mapsto\mathbb{R}$.
% For a bipartite graph $\mathcal{G}$, given a node triplet $(u, v, v')$, where $u\in\mathcal{U}$, $v, v'\in\mathcal{V}$, $(u, v)\in\mathcal{E}$ is a positive pair and $(u, v')\notin\mathcal{E}$ is a negative pair.
% Let $\mathbf{u}$, $\mathbf{v}$, $\mathbf{v}'\in\mathbb{R}^d$ denote the embeddings of $u$, $v$ and $v'$.
% Their triplet loss can be written as:
% \begin{equation}
%     \mathcal{L}_{TP}(\mathbf{u}, \mathbf{v}, \mathbf{v'}) = [f_S(\mathbf{u}, \mathbf{v}) - f_S(\mathbf{u}, \mathbf{v'})+\gamma]_+
% \end{equation}
% where $\gamma\in\mathbb{R}$ is the margin parameter, and $[x]_+=\max(x, 0)$. 
% The triplet loss is unable to capture the cluster-level information of $\mathcal{U}$ and $\mathcal{V}$.
% To address this problem, we use the triplet loss alongside the co-cluster mutual information to train the graph encoder $f_E$, which improves the quality of the embeddings $\mathbf{U}$ and $\mathbf{V}$. 

\section{Methodology}\label{sec:method}
\pdfoutput=1

% In this section, we present the details of the proposed \underline{co}-cluster \underline{in}fomax (COIN) framework.
% We first introduce the COIN framework and then provide theoretical analysis.

\begin{figure}
    \centering
    \includegraphics[width=0.9\textwidth]{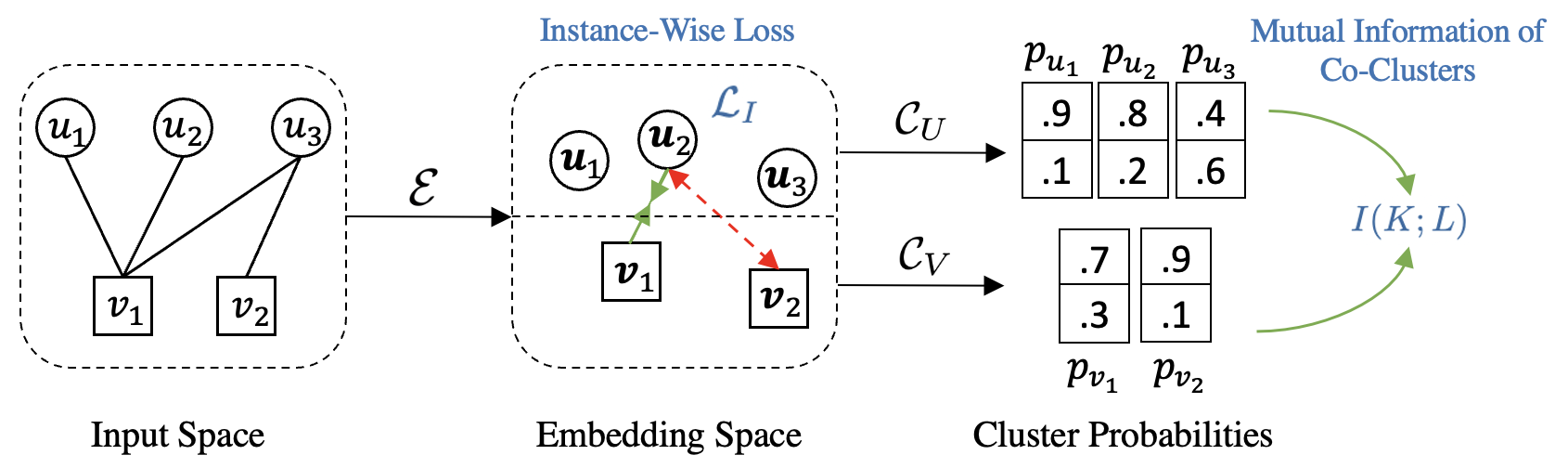}
    \caption{Overview of COIN. The proposed COIN is comprised of a co-clustering function $\phi=(\phi_U, \phi_V)$, instance-wise objective $\mathcal{L}_I$ and mutual information of co-clusters $I(K;L)$. $\phi_U$ and $\phi_V$ (1) share the same graph encoder $\mathcal{E}$, which maps nodes $u$, $v$ into embeddings $\mathbf{u}$, $\mathbf{v}$, and (2) have separate cluster networks $\mathcal{C}_U$ and $\mathcal{C}_V$, which produce probabilities of different clusters $p_u$ and $p_v$ for nodes $u$ and $v$.
    $\mathcal{L}_I$ pulls positive node pairs closer (green solid arrow) and pushes negative node pairs far apart (red dashed arrow). $I(K;L)$ is calculated based on $p_u$ and $p_v$.}
    \label{fig:overview}
\end{figure}

\subsection{Co-Cluster Infomax}
Prior deep infomax based frameworks \cite{hassani2020contrastive, velickovic2019deep, jing2021hdmi, park2020unsupervised} maximize the mutual information of local and global representations, which ignore the co-cluster information and the neural mutual information estimators could be unreliable in practice.
Different from these works, COIN directly calculates and then maximizes the mutual information of co-clusters $I(K;L)$ to capture cluster-level information.
An overview of COIN is presented in Figure \ref{fig:overview}.

\paragraph{Mutual Information of Co-Clusters.}
To calculate the mutual information of co-clusters $I(K;L)$, we need to obtain the joint distribution $p(k, l)$ and marginal distributions $p(k)$ and $p(l)$.
% Directly calculating $p(k, l)$ is impractical and computational-consuming, because it requires a traversal over all of the co-cluster assignments for every node pair.
% To solve this problem, 
To calculate $p(k, l)$, we decompose $p(k, l)$ by two other easy-to-obtain distributions $p(u, v)$ and $p(k, l|u, v)$: $p(k, l)=\sum_{u, v}p(k, l|u, v)p(u, v)$.
Here, $p(u, v)$ is the prior assumption over the joint distribution of nodes $u$, $v$, which preserves the structure of $G$.
For simplicity, we define the prior $p(u, v)=\frac{1}{Z}$ if $(u, v)$ is connected, where $Z$ is a normalization parameter, $p(u, v)=0$ otherwise.

With the prior distribution $p(u, v)$, the next step is to obtain the conditional joint distribution $p(k, l|u, v)$.
Given a bipartite graph ${G} = ({U}, {V}, {E})$, we use a co-clustering algorithm $\phi=(\phi_U, \phi_V)$ to obtain cluster distributions for each node.
We define $\phi_U$ and $\phi_V$ via neural networks, which are illustrated in Figure \ref{fig:overview}.
They are comprised of two components: 
(1) a shared graph encoder to obtain node embeddings $(\mathbf{U}, \mathbf{V})=\mathcal{E}(G)$, and
(2) separate cluster networks to obtain conditional probabilities of clusters $\mathcal{C}_U(\mathbf{u})\in\mathbb{R}^{N_K}$ and $\mathcal{C}_V(\mathbf{v})\in\mathbb{R}^{N_L}$, where $\mathbf{u}\in\mathbf{U}$ and $\mathbf{v}\in\mathbf{V}$ are embeddings of nodes $u$ and $v$.
Since neural encoders are usually treated as deterministic and injective functions in practice \cite{vincent2008extracting}, we have $p(k, l|u, v)=p(k, l|\mathbf{u}, \mathbf{v})$, where $\mathbf{u}$ and $\mathbf{v}$ are obtained by $\mathcal{E}$.
Furthermore, since cluster networks $\mathcal{C}_U$ and $\mathcal{C}_V$ have separate sets of parameters, therefore, it is natural to assume $p(k, l|\mathbf{u}, \mathbf{v})=p(k|\mathbf{u})p(l|\mathbf{v})$. 
As a result, $p(k, l|u, v)=p(k|\mathbf{u})p(l|\mathbf{v})$.
Finally, combining $p(k|\mathbf{u})p(l|\mathbf{v})$ with the assumed prior distribution $p(u, v)$, we have 
\begin{equation}\label{eq:pkl}
    p(k, l) = \sum_{u, v}p(k|\mathbf{u})p(l|\mathbf{v})p(u, v)
\end{equation}
where $\mathbf{u}$, $\mathbf{v}$ are obtained by $\mathcal{E}$, and $p(k|\mathbf{u})$, $p(l|\mathbf{v})$ are obtained by $\mathcal{C}_U$, $\mathcal{C}_V$.

Given the joint distribution of co-clusters $p(k, l)$, we can easily calculate the marginal distributions $p(k)=\sum_{l}p(k, l)$ and $p(l)=\sum_{k}p(k, l)$.
According to the definition of mutual information, we can directly calculate $I(K;L)$ by $I(K; L) = \sum_{k, l}p(k, l)\log\frac{p(k, l)}{p(k)p(l)}$.
Since $p(k, l)$, $p(k)$ and $p(l)$ are calculated by neural networks, which are fully differentiable, we can directly maximize $I(K;L)$. 

\paragraph{Instance-Wise Objective Function.}
A clustering objective function alone is incapable of achieving the optimal performance \cite{li2020prototypical, li2021contrastive, jing2022x}, since it is unable to capture local information such as the connectivity between two nodes $u$, $v$.
Therefore, an instance-wise objective function is usually used together with the clustering objective function. 
In this paper, we use an objective function, shown in Equation \eqref{eq:instance_wise_object}, based on the popular InfoNCE \cite{oord2018representation} as the instance-wise objective function, which pulls connected $u$, $v$ together and pushes away unconnected $u$, $v'$ in the embedding space, as illustrated in the middle part of Figure \ref{fig:overview}.
\begin{equation}\label{eq:instance_wise_object}
    \max\mathcal{L}_I = \max\sum_{\mathbf{u}, \mathbf{v}}\frac{e^{\mathcal{S}(\mathbf{u}, \mathbf{v})}}{e^{\mathcal{S}(\mathbf{u}, \mathbf{v})} + e^{\mathcal{S}(\mathbf{u}, \mathbf{v}')}}
\end{equation}
where $\mathcal{S}$ is the similarity measuring function, and $\mathbf{v}'$ is the embedding of the negative sample $v'$, which is randomly sampled from $V'\subseteq V$, such that $(u, v')$ is not connected: $(u, v')\notin E$ for $\forall v'\in V'$.
Most of the recent InfoNCE based studies \cite{li2020prototypical, you2020graph, zhu2021graph, feng2022ariel, zheng2021tackling} use the normalized temperature-scaled cross entropy loss (NT-Xent) variant of InfoNCE. 
They instantiate $\mathcal{S}$ with unlearnable functions (e.g. cosine similarity and dot product) and normalize the similarity scores with a temperature parameter $\tau$.
However, unlearnable functions might not be the optimal choice for complex manifolds in the embedding space \cite{nickel2017poincare}, and tuning the hyper-parameter $\tau$ could be time-consuming.
To tackle with this issue, we use a multi-layer perceptron (MLP) to instantiate $S$.

\paragraph{Overall Objective Function.}
The overall objective function is a linear combination of the mutual information of co-clusters $I(K;L)$ and the instance-wise objective function $\mathcal{L}_I$:
\begin{equation}
    \max\mathcal{L} = \max \lambda I(K;L) + \mathcal{L}_{I} 
\end{equation}
where $\lambda$ is the co-efficient for $I(K;L)$.

\subsection{Theoretical Analysis}
We provide theoretical analysis for the proposed COIN framework with respect to the mutual information $I(K;L)$ of the co-clusters $K$ and $L$.

We study the relationship between the mutual information of co-clusters $I(K;L)$ and the mutual information of embeddings $I(\mathbf{U};\mathbf{V})$.
Based on Lemma \ref{lemma}, we first introduce Theorem \ref{theorem:variational_bound}, where we prove that $I(K;L)$ is a variational lower-bound for $I(\mathbf{U};\mathbf{V})$.
This theorem implies that maximizing $I(K;L)$ will increase $I(\mathbf{U};\mathbf{V})$.

\begin{lemma}\label{lemma}
For COIN, the following inequality holds:
\begin{equation}
    \log p(k)p(l) \geq \sum_{\mathbf{u}, \mathbf{v}}p(\mathbf{u}, \mathbf{v}|k, l)\log\frac{p(\mathbf{u}, k)p(\mathbf{v}, l)}{p(\mathbf{u}, \mathbf{v}|k, l)}
\end{equation}
where $k\in\{1, \cdots, N_K\}$, $l\in\{1, \cdots, N_L\}$, $\mathbf{u}\in\mathbf{U}$, $\mathbf{v}\in\mathbf{V}$.
\end{lemma}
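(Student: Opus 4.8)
The inequality has exactly the shape of a Jensen bound, so the plan is to recognize the right-hand side as an expectation under a genuine probability distribution and then push the logarithm outside. First I would observe that for fixed $(k,l)$, the quantity $p(\mathbf{u}, \mathbf{v}|k, l)$ is a conditional distribution over the pair $(\mathbf{u}, \mathbf{v})$ and hence is normalized, $\sum_{\mathbf{u}, \mathbf{v}} p(\mathbf{u}, \mathbf{v}|k, l) = 1$. Abbreviating $q(\mathbf{u}, \mathbf{v}) := p(\mathbf{u}, \mathbf{v}|k, l)$ and $r(\mathbf{u}, \mathbf{v}) := p(\mathbf{u}, k)\,p(\mathbf{v}, l)$, the right-hand side is precisely $\sum_{\mathbf{u}, \mathbf{v}} q \log(r/q)$, the negative KL-type functional that Jensen's inequality controls.

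Second, I would invoke the concavity of the logarithm (Jensen's inequality) to obtain $\sum_{\mathbf{u}, \mathbf{v}} q \log(r/q) \le \log\!\big(\sum_{\mathbf{u}, \mathbf{v}} q \cdot (r/q)\big) = \log \sum_{\mathbf{u}, \mathbf{v}} r$. The pairs where $q = 0$ contribute nothing on the left (under the standard convention $0\log 0 = 0$) and only add nonnegative mass to the sum on the right, so the bound is safe; this support bookkeeping is the only place where any care is required.

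Third, the remaining step is purely algebraic: since the summand factorizes over the product index set, $\sum_{\mathbf{u}, \mathbf{v}} p(\mathbf{u}, k)\,p(\mathbf{v}, l) = \big(\sum_{\mathbf{u}} p(\mathbf{u}, k)\big)\big(\sum_{\mathbf{v}} p(\mathbf{v}, l)\big) = p(k)\,p(l)$, using the marginalization identities $\sum_{\mathbf{u}} p(\mathbf{u}, k) = p(k)$ and $\sum_{\mathbf{v}} p(\mathbf{v}, l) = p(l)$. Chaining this with the Jensen step and using monotonicity of the logarithm gives $\sum_{\mathbf{u}, \mathbf{v}} p(\mathbf{u}, \mathbf{v}|k, l)\log\frac{p(\mathbf{u}, k)p(\mathbf{v}, l)}{p(\mathbf{u}, \mathbf{v}|k, l)} \le \log p(k)p(l)$, which is exactly the claimed inequality.

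\textbf{Main obstacle.} I expect the difficulty to be bookkeeping rather than any deep idea: the substance of the argument is a single application of Jensen, and the only things to verify are that $p(\mathbf{u}, \mathbf{v}|k, l)$ is normalized to one over $(\mathbf{u}, \mathbf{v})$ — which is what licenses Jensen — and that zero-probability pairs are handled cleanly when passing to $\log \sum_{\mathbf{u},\mathbf{v}} r$. The factorization of the double sum into the product of marginals follows immediately from the definitions of $p(k)$ and $p(l)$, so no further structural input from COIN is needed for this lemma.
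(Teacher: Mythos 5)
Your proposal is correct and matches the paper's proof essentially step for step: both arguments rest on the marginalization identity $p(k)p(l)=\sum_{\mathbf{u},\mathbf{v}}p(\mathbf{u},k)p(\mathbf{v},l)$ together with a single application of Jensen's inequality using the normalized conditional $p(\mathbf{u},\mathbf{v}|k,l)$ as the averaging distribution (the paper inserts the factor $\frac{p(\mathbf{u},\mathbf{v}|k,l)}{p(\mathbf{u},\mathbf{v}|k,l)}$ and pushes the logarithm inside, which is your argument run in the opposite direction). Your extra care about zero-probability pairs is a minor refinement the paper elides, but it does not change the substance.
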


% \begin{proof}
% We have $p(k)=\sum_{\mathbf{u}}p(\mathbf{u}, k)$, $p(l)=\sum_{\mathbf{v}}p(\mathbf{v}, l)$, and thus $p(k)p(l)=\sum_{\mathbf{u}, \mathbf{v}}p(\mathbf{u}, k)p(\mathbf{v}, l)$.
% As a result, we have:
% \begin{equation}
%     \log p(k)p(l) = \log\sum_{\mathbf{u}, \mathbf{v}}p(k , \mathbf{u})p(l, \mathbf{v})\frac{p(\mathbf{u}, \mathbf{v}|k, l)}{p(\mathbf{u}, \mathbf{v}|k, l)} \geq \sum_{\mathbf{u}, \mathbf{v}}p(\mathbf{u}, \mathbf{v}|k, l)\log\frac{p(\mathbf{u}, k)p(\mathbf{v}, l)}{p(\mathbf{u}, \mathbf{v}|k, l)}
% \end{equation}
% where the inequality holds according to Jensen's inequality.
% \end{proof}
\begin{proof}
Please refer to Lemma \ref{lemma_appendix} in Appendix \ref{appendix:theory}.
\end{proof}

\begin{theorem}[Variational Bound]\label{theorem:variational_bound}
The mutual information $I(\mathbf{U};\mathbf{V})$ of embeddings $\mathbf{U}$ and $\mathbf{V}$ is lower-bounded by the mutual information of co-clusters $I(K;L)$:
\begin{equation}\label{eq:theorem_bound}
    I(K;L)\leq I(\mathbf{U};\mathbf{V})
\end{equation}
\end{theorem}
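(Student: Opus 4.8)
The plan is to start from the definition $I(K;L) = \sum_{k,l} p(k,l)\log\frac{p(k,l)}{p(k)p(l)}$ and use Lemma \ref{lemma} to control the marginal term $\log p(k)p(l)$. Since the lemma provides $\log p(k)p(l) \ge \sum_{\mathbf{u},\mathbf{v}} p(\mathbf{u},\mathbf{v}|k,l)\log\frac{p(\mathbf{u},k)p(\mathbf{v},l)}{p(\mathbf{u},\mathbf{v}|k,l)}$, I would negate this bound and add $\log p(k,l)$, so that each summand $\log p(k,l)-\log p(k)p(l)$ is replaced by a larger quantity. Weighting by $p(k,l)\ge 0$ and summing preserves the direction, yielding $I(K;L) \le A - B$ with $A = \sum_{k,l} p(k,l)\log p(k,l)$ and $B = \sum_{k,l} p(k,l)\sum_{\mathbf{u},\mathbf{v}} p(\mathbf{u},\mathbf{v}|k,l)\log\frac{p(\mathbf{u},k)p(\mathbf{v},l)}{p(\mathbf{u},\mathbf{v}|k,l)}$.

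Next I would merge $A$ and $B$ into one double sum over $(\mathbf{u},\mathbf{v},k,l)$. The enabling identity is $p(k,l)\,p(\mathbf{u},\mathbf{v}|k,l) = p(\mathbf{u},\mathbf{v},k,l)$; inserting $\sum_{\mathbf{u},\mathbf{v}} p(\mathbf{u},\mathbf{v}|k,l)=1$ into $A$ lets both terms share the common weight $p(\mathbf{u},\mathbf{v},k,l)$. Collecting the logarithms gives $A - B = \sum_{\mathbf{u},\mathbf{v},k,l} p(\mathbf{u},\mathbf{v},k,l)\log\frac{p(\mathbf{u},\mathbf{v},k,l)}{p(\mathbf{u},k)p(\mathbf{v},l)}$, reducing everything to a single expectation of a log-ratio.

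The crux is simplifying that ratio using COIN's modeling assumptions. From $p(k,l|\mathbf{u},\mathbf{v}) = p(k|\mathbf{u})p(l|\mathbf{v})$ I get $p(\mathbf{u},\mathbf{v},k,l) = p(k|\mathbf{u})p(l|\mathbf{v})p(\mathbf{u},\mathbf{v})$, while $p(\mathbf{u},k)=p(k|\mathbf{u})p(\mathbf{u})$ and $p(\mathbf{v},l)=p(l|\mathbf{v})p(\mathbf{v})$. The cluster-assignment factors $p(k|\mathbf{u})$ and $p(l|\mathbf{v})$ cancel exactly, leaving the $k,l$-independent ratio $\frac{p(\mathbf{u},\mathbf{v})}{p(\mathbf{u})p(\mathbf{v})}$. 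This cancellation — made possible by the conditional-independence factorization of the two separate cluster networks — is the step I expect to carry the whole argument; everything before it is just arranging the sums so that this factorization applies cleanly.

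Finally, with a $k,l$-independent integrand, I would marginalize out the cluster indices via $\sum_{k,l} p(\mathbf{u},\mathbf{v},k,l)=p(\mathbf{u},\mathbf{v})$, collapsing the double sum to $\sum_{\mathbf{u},\mathbf{v}} p(\mathbf{u},\mathbf{v})\log\frac{p(\mathbf{u},\mathbf{v})}{p(\mathbf{u})p(\mathbf{v})} = I(\mathbf{U};\mathbf{V})$. Chaining the inequalities then gives $I(K;L) \le A - B = I(\mathbf{U};\mathbf{V})$, as claimed. Conceptually this is a data-processing inequality for the Markov structure $K \leftarrow \mathbf{U} - \mathbf{V} \rightarrow L$, with Lemma \ref{lemma} serving as the variational (Jensen-type) device that packages the hard step, leaving the rest as bookkeeping.
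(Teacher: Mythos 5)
Your proposal is correct and takes essentially the same route as the paper's proof: your $A-B$ decomposition is exactly the paper's intermediate quantity $R$ with the logarithm expanded, and the subsequent merge of $p(k,l)\,p(\mathbf{u},\mathbf{v}|k,l)$ into $p(\mathbf{u},\mathbf{v},k,l)$, the cancellation via $p(k,l|\mathbf{u},\mathbf{v})=p(k|\mathbf{u})p(l|\mathbf{v})$, and the final marginalization over $(k,l)$ match the paper step for step. The closing data-processing remark is a pleasant gloss but not a different argument.
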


\begin{proof}
According to the definition of mutual information and Lemma \ref{lemma}, we have:
\begin{equation}\label{eq:thorem_1_1}
    I(K; L)= \sum_{k, l}p(k, l)\frac{\log p(k, l)}{\log p(k)p(l)} \leq \sum_{k, l}p(k, l)\sum_{\mathbf{u}, \mathbf{v}}p(\mathbf{u}, \mathbf{v}|k, l)\log \frac{p(k, l)p(\mathbf{u}, \mathbf{v}|k, l)}{p(\mathbf{u}, k)p(\mathbf{v}, l)} = R
\end{equation}
By merging $p(k, l)$ with $p(\mathbf{u}, \mathbf{v}|k, l)$, we have:
\begin{equation}
    R = \sum_{\mathbf{u}, \mathbf{v}, k, l}p(\mathbf{u}, \mathbf{v}, k, l)\log \frac{p(\mathbf{u}, \mathbf{v}, k, l)}{p(\mathbf{u}, k)p(\mathbf{v}, l)} = \sum_{\mathbf{u}, \mathbf{v}, k, l}p(\mathbf{u}, \mathbf{v}, k, l)\log \frac{p(k, l|\mathbf{u}, \mathbf{v})p(\mathbf{u}, \mathbf{v})}{p(\mathbf{u})p(k|\mathbf{u})p(\mathbf{v})p(l|\mathbf{v})}
\end{equation}
Since cluster networks $\mathcal{C}_U$ and $\mathcal{C}_V$ have separate sets of parameters and inputs, therefore, it is natural to have $p(k, l|\mathbf{u}, \mathbf{v})=p(k|\mathbf{u})p(l|\mathbf{v})$.
As a result, we have:
\begin{equation}
    R = \sum_{\mathbf{u}, \mathbf{v}, k, l}p(\mathbf{u}, \mathbf{v}, k, l)\log \frac{p(\mathbf{u}, \mathbf{v})}{p(\mathbf{u})p(\mathbf{v})} = \sum_{\mathbf{u}, \mathbf{v}}p(\mathbf{u}, \mathbf{v})\log \frac{p(\mathbf{u}, \mathbf{v})}{p(\mathbf{u})p(\mathbf{v})} = I(\mathbf{U};\mathbf{V})
\end{equation}
Take the above result back to Inequality \eqref{eq:thorem_1_1}, and we will obtain Inequality \eqref{eq:theorem_bound}. 
\end{proof}

We also study the relationship between mutual information of co-clusters $I(K;L)$ and mutual information of nodes $I(U;V)$, where $I(U;V)$ is calculated based on the prior joint distribution $p(U, V)$ \emph{assumed} over nodes $U$ and $V$.
In Theorem \ref{theorem:diff_mi}, we derive the different between $I(K;L)$ and $I(U;V)$. 
Note that different from the theory in \cite{dhillon2003information}, which considers \emph{hard} clustering, we consider \emph{soft} clustering, and thus the proofs and conclusions are different.
From Theorem \ref{theorem:diff_mi}, we can obtain that $I(U;V)-I(K;L)=D_{KL}(p||q)\geq 0$, which implies that the learning ability of COIN is upper-bounded by the prior distribution $p(U,V)$.

\begin{theorem}[Mutual Information Difference]\label{theorem:diff_mi}
Given the prior joint distribution $p(U, V)$ assumed on $U$, $V$, and a soft co-clustering function $\phi=(\phi_U, \phi_V)$, where $\phi_U$ and $\phi_V$ are deterministic functions mapping nodes $U$,$V$ to their cluster distributions $p(K|U)$ and $p(L|V)$, we have:
\begin{equation}
    I(U, V) - I(K;L) = D_{KL}(p(K,L,U, V)||q(K,L,U, V))
\end{equation}
where $D_{KL}$ denotes the KL-divergence, and $q(K,L,U,V)=p(K,L)p(U|K)p(V|L)$.
\end{theorem}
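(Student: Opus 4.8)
The plan is to lift both mutual-information quantities to expectations over the full four-variable joint $p(k,l,u,v)$ and then compare the resulting log-ratios directly. The first step is to pin down this joint. By the same reasoning used in Theorem~\ref{theorem:variational_bound}, the separate cluster networks give the conditional independence $p(k,l|u,v)=p(k|u)p(l|v)$, so the joint factorizes as $p(k,l,u,v)=p(k|u)p(l|v)p(u,v)$. I would also record the elementary identities $p(u,k)=p(u)p(k|u)$ and $p(v,l)=p(v)p(l|v)$, which hold because $\phi_U,\phi_V$ are (deterministic) functions of the nodes alone.

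Next I would rewrite each mutual information as a sum against the full joint. Since $\sum_{u,v}p(k,l,u,v)=p(k,l)$ and the integrand $\log\frac{p(k,l)}{p(k)p(l)}$ depends only on $k,l$, we have $I(K;L)=\sum_{k,l,u,v}p(k,l,u,v)\log\frac{p(k,l)}{p(k)p(l)}$, and symmetrically $I(U;V)=\sum_{k,l,u,v}p(k,l,u,v)\log\frac{p(u,v)}{p(u)p(v)}$. Subtracting collapses everything into a single expectation of a log-ratio, namely $I(U;V)-I(K;L)=\sum_{k,l,u,v}p(k,l,u,v)\log\frac{p(u,v)p(k)p(l)}{p(u)p(v)p(k,l)}$.

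The heart of the proof is to show that this log-ratio is exactly $\log\frac{p(k,l,u,v)}{q(k,l,u,v)}$ with $q(k,l,u,v)=p(k,l)p(u|k)p(v|l)$. Expanding $q$ through $p(u|k)=p(u,k)/p(k)$ and $p(v|l)=p(v,l)/p(l)$ and substituting the identities above gives $q(k,l,u,v)=p(k,l)\tfrac{p(u)p(k|u)}{p(k)}\tfrac{p(v)p(l|v)}{p(l)}$. Dividing the joint $p(k,l,u,v)=p(k|u)p(l|v)p(u,v)$ by this expression cancels the factors $p(k|u)$ and $p(l|v)$ and reproduces precisely $\frac{p(u,v)p(k)p(l)}{p(u)p(v)p(k,l)}$, so the two log-ratios agree term by term and the difference equals $D_{KL}(p\|q)$.

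The one point that needs genuine care, rather than routine bookkeeping, is verifying that $q$ is a bona fide probability distribution, so that the right-hand side is a legitimate KL-divergence (and in particular nonnegative, which is what licenses the reading that $I(U;V)$ upper-bounds $I(K;L)$). I would check this directly: $\sum_{k,l,u,v}q(k,l,u,v)=\sum_{k,l}p(k,l)\big(\sum_u p(u|k)\big)\big(\sum_v p(v|l)\big)=\sum_{k,l}p(k,l)=1$. The other subtlety, flagged in the remark preceding the statement, is that here the assignments $p(k|u),p(l|v)$ are \emph{soft} rather than the hard $\{0,1\}$ indicators of \cite{dhillon2003information}; the argument above never invokes determinism of the assignments, only that $\phi_U,\phi_V$ act on disjoint inputs, so it carries over to the soft case unchanged.
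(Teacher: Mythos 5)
Your proposal is correct and follows essentially the same route as the paper's proof: both reduce $I(U;V)-I(K;L)$ to a single expectation of the log-ratio $\frac{p(u,v)p(k)p(l)}{p(u)p(v)p(k,l)}$ over the full joint, then identify that ratio with $\frac{p(k,l,u,v)}{q(k,l,u,v)}$ via the conditional independence $p(k,l|u,v)=p(k|u)p(l|v)$ and Bayes' rule --- you merely run the algebra backwards (expanding $q$ and cancelling) where the paper inserts $\frac{p(k,l|u,v)}{p(k,l|u,v)}$ and simplifies forwards. Your explicit verification that $\sum_{k,l,u,v}q=1$ spells out what the paper only asserts, which is a small plus but not a different argument.
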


\begin{proof}
Please refer to Theorem \ref{theorem_appendix:diff_mi} in Appendix \ref{appendix:theory}.
\end{proof}

\section{Experiments}\label{sec:experiments}
\pdfoutput=1

In this section, we extensively evaluate the effectiveness of COIN on a variety of benchmark datasets and downstream tasks, and provide empirical evaluation results for COIN.

\subsection{Experimental Setup}\label{sec:experimental_setup}
\paragraph{Datasets.}
We evaluate the proposed COIN on six public benchmark datasets with three different downstream tasks.
The descriptions of the datasets are presented in Table \ref{tab:data}.
Wikipedia contains the edit relationship between authors and pages, which is used for link prediction.
We use the data processed by \cite{cao2021bipartite}, which has two different splits: 50\% and 40\% for training.
ML-100K and ML-10M \cite{harper2015movielens} contain the ratings of users for movies, which are used for top-K recommendation.
WebKB, Wisconsin, and IMDB are document-keyword interaction datasets, which are used for co-clustering \cite{xu2019deep}. 
Further descriptions and the links of the datasets are provided in Appendix \ref{appendix:data}.

\paragraph{Evaluation Metrics.}
For link prediction, we use Area Under the ROC and Precision-Recall Curves, i.e. AUC-ROC and AUC-PR.
For top-K recommendation, F1 score, Normalized Discounted Cumulative Gain (NDCG), Mean Average Precision (MAP), and Mean Reciprocal Rank (MRR) are used.
For co-clustering, we use Normalized Mutual Information (NMI).

\begin{table}[t]
    \centering
    \scriptsize
    \caption{Descriptions of datasets}
    \begin{tabular}{cccccccc}
        \hline
        Dataset & Task & Evaluation & $|U|$ & $|V|$ & $|E|$ & Density & \# Class\\
        \hline
        Wikipedia & Link Prediction & AUC-ROC, AUC-PR & 15,000 & 3,214 & 64,095 & 0.1\% & -\\
        \hline
        ML-100K & Top-K Recommendation & F1, NDCG, MAP, MRR & 943 & 1,682 & 100,000 & 6.3\% & -\\
        ML-10M & Top-K Recommendation & F1, NDCG, MAP, MRR & 69,878 & 10,677 & 10,000,054 & 1.3\% & -\\
        \hline
        WebKB & Co-Clustering & NMI & 4,199 & 1,000 & 342,882 & 8.2\% & 4\\
        Wisconsin & Co-Clustreing & NMI & 265 & 1703 & 25,479 & 5.6\% & 5\\
        IMDB & Co-Clustering & NMI & 617 & 1878 & 20,156 & 1.7\% & 17\\
        \hline
    \end{tabular}
    \label{tab:data}
\end{table}

\paragraph{Comparison Methods.}
We compare the proposed COIN with three different groups of baseline methods: 
(1) \textbf{bipartite graph methods}: 
BiNE \cite{gao2018bine} and PinSage \cite{ying2018graph} learn node embeddings based on random walks; 
GC-MC \cite{berg2017graph} and IGMC \cite{zhang2019inductive} are matrix completion based methods; 
NeuMF \cite{he2017neural} and NGCF \cite{wang2019neural} are collaborative filtering based methods;
BiGI \cite{cao2021bipartite} is an infomax contrastive learning method.
(2) \textbf{co-clustering methods}: 
CCInfo \cite{dhillon2003information} is an information theoretic method;
SCC \cite{dhillon2001co} and SBC \cite{kluger2003spectral} are spectral methods; 
DRCC \cite{gu2009co}, CCMod \cite{ailem2015co} and SCMK \cite{kang2017twin} are matrix analysis based methods;
DeepCC \cite{xu2019deep} is a deep learning method.
(3) \textbf{graph embedding methods}:
DeepWalk \cite{perozzi2014deepwalk}, LINE \cite{tang2015line}, Node2vec \cite{grover2016node2vec} and Metapath2vec \cite{dong2017metapath2vec} are random walk based embedding methods;
VGAE \cite{kipf2016variational} learns node embeddigns by reconstructing the adjacency matrix;
HDI \cite{jing2021hdmi} maximizes high-order mutual information;
DMGI \cite{park2020unsupervised} and HDMI \cite{jing2021hdmi} are extensions of DGI \cite{velickovic2019deep} to multiplex heterogeneous graphs.
The results of baseline methods are copied from respective papers, except for GraphCL, HDI and HDMI.
For GraphCL, HDI and HDMI, we use their official implementations and set the embedding size as 128, which is the same as COIN and other baselines.

\paragraph{Bipartite Graph Encoder.}
We design a simple $L$-layer bipartite graph encoder $\mathcal{E}$ to capture 2-hop neighbor information. 
Given the embeddings of $U$ after the $(l-1)$-th layer $\mathbf{U}^{l-1}\in\mathbb{R}^{|U|\times d}$, the updating function of the $l$-th layer is:
\begin{align}
    \hat{\mathbf{V}}^{l} &= \delta_2( [\delta_1(\mathbf{A}_{VU}\mathbf{U}^{l-1}\mathbf{W}_{1}^{l})||\mathbf{V}^{l-1}]\mathbf{W}_{2}^{l})\\
    \mathbf{U}^{l} &= \delta_2( [\delta_1(\mathbf{A}_{UV}\hat{\mathbf{V}}^{l}\mathbf{W}_{3}^{l})||\mathbf{U}^{l-1}]\mathbf{W}_{4}^{l})
\end{align}
where $\mathbf{A}_{VU}\in\mathbb{R}^{|V|\times|U|}$, $\mathbf{A}_{UV}\in\mathbb{R}^{|U|\times|V|}$ are row normalized adjacency matrices;
$\mathbf{W}\in\mathbb{R}^{d\times d}$ denotes the learnable weight matrix;
$\delta$ denotes the activation function;
$[\cdot||\cdot]$ denotes the concatenation operation;
$d$ is the size of hidden dimension;
$\mathbf{U}^0$ is the randomly initialized embedding matrix.
The updating function for $V$ is the same as $U$, except for the specific weights of $\mathbf{W}$.

\paragraph{Training Details.}
For the encoder $\mathcal{E}$, we set the number of layers $L=2$, the hidden dimension $d=128$, $\delta_1$ and $\delta_2$ are LeakyReLU (negative slope is 0.1) and Tanh activation functions respectively.
The similarity function $\mathcal{S}$ is a two-layer MLP: $\mathcal{S}(\mathbf{u}, \mathbf{v})=\mathbf{W}_2(\text{Tanh}(\mathbf{W}_1[\mathbf{u}||\mathbf{v}]))$, where $\mathbf{W}_1\in\mathbb{R}^{d\times 2d}$, $\mathbf{W}_2\in\mathbb{R}^{d\times d}$, $d=128$.
For simplicity, we keep $N_K=N_L$.
For Wikipedia, the number of epochs is 50, the number of clusters $N_K=N_L=4$ and $\lambda=10$.
For ML-100K and ML-10M, the number of epochs is 100, the number of clusters $N_K=N_L=5$ and $\lambda=1$.
For the co-clustering datasets, the number of epochs is 100 and $\lambda$ is tuned within $[0.01, 0.1, 1, 10]$.
The numbers of clusters are 4, 3, 17 for WebKB, Wisconsin and IMDB.
The learning rate is fixed as 0.0005, and the optimizer is Adam \cite{kingma2014adam}.
Dropout with $p=0.5$ is applied to each layer of $\mathcal{E}$.
COIN is implemented by PyTorch \cite{paszke2019pytorch} and trained with one NVIDIA Tesla V-100 GPU.
We run COIN three times and report its mean and standard deviation.
We will release the code upon publication.

\begin{table}[t]
    \centering
    \scriptsize
    \caption{Performance (\%) of link prediction on Wikepedia}
    \begin{tabular}{ccccc}
        \hline
        \multirow{2}{*}{Method} & \multicolumn{2}{c}{Wiki (50\%)} & \multicolumn{2}{c}{Wiki (40\%)}\\
        \cmidrule(rl){2-3} \cmidrule(rl){4-5}
        & AUC-ROC & AUC-PR & AUC-ROC & AUC-PR\\
        \hline
        DeepWalk & 87.19 & 85.30 & 81.60 & 80.29\\
        LINE & 66.69 & 71.49 & 64.28 & 69.89\\
        Node2vec & 89.37 & 88.12 & 88.41 & 87.55\\
        VGAE & 87.81 & 86.93 & 86.32 & 85.74\\
        Metapath2vec & 87.20 & 84.94 & 86.75 & 84.63\\
        GraphCL & 94.40 & 94.88 & 93.67 & 94.25  \\
        HDI & 93.74 & 94.50 & 93.00 & 93.86 \\
        DMGI & 93.02 & 93.11 & 92.01 & 92.14\\
        HDMI & 94.18 & 94.86 & 93.57 & 94.23\\
        PinSage & 94.27 & 93.95 & 92.79 & 92.56\\
        BiNE & 94.33 & 93.93 & 93.15 & 93.34\\
        GC-MC & 91.90 & 92.19 & 91.40 & 91.74\\
        IGMC & 92.85 & 93.10 & 91.90 & 92.19\\
        NeuMF & 92.62 & 93.38 & 91.47 & 92.63\\
        NGCF & 94.26 & 94.07 & 93.06 & 93.37\\
        BiGI & 94.91 & 94.75 & 94.08 & 94.02\\
        \hline
        COIN & \textbf{95.30} & \textbf{95.05} & \textbf{94.53} & \textbf{94.44}\\
        \hline
    \end{tabular}
    \label{tab:wiki}
\end{table}

\subsection{Main Results}
% \hh{in tables, 1. the rows for GRaphCL are missing 2. some methods have the std and others do not}
% \by{methods w/o std are copied from previous papers. they only report average}
% We compare COIN with baselines on link prediction, top-K recommendation, and co-clustering.

\paragraph{Link Prediction.}
Given the learned embeddings $\mathbf{U}$, $\mathbf{V}$ and the edges $E$, we train a logistic regression classifier, which will then be evaluated on the test data.
The experimental results are presented in Table \ref{tab:wiki}. 
Comparing the bipartite graph embedding methods with the graph embedding methods, we can observe that bipartite graph methods usually have higher scores.
For random walk based methods, BiNE and PinSage have higher scores than DeepWalk and Node2vec.
For contrastive learning methods, COIN performs better than HDMI and GraphCL.
This observation indicates the necessity of introducing methods specifically for bipartite graphs.
Within the group of bipartite graph methods, recent contrastive learning methods, (COIN and BiGI) achieve much higher scores than the traditional random walk methods (BiNE and PinSage), matrix completion methods (GC-MC and IGMC), and collaborative filtering methods (NeuMF and NGCF). 
That is to say, contrastive learning is a better paradigm for learning node embeddings. 
Finally, among all the methods, COIN achieves the best scores on all metrics, indicating the power of COIN.

\paragraph{Top-K Recommendation.}
The results on ML-100K and ML-10M are presented in Tables \ref{tab:ml_100k}-\ref{tab:ml_10m}.
For ML-100K, comparing the two recent contrastive learning methods BiGI and HDMI from the bipartite graph embedding group and the graph embedding group, it can be observed that BiGI performs much better (e.g. 23.36 v.s. 20.51 on F1@10), indicating the importance of designing particular models for bipartite graphs.
The proposed COIN can further outperform BiGI on all of the metrics (e.g. 72.76 v.s. 68.78 on MRR@10), indicating the effectiveness of the proposed COIN.
For ML-10M, BiGI and HDMI are competitive to each other. 
Nevertheless, COIN significantly outperforms both BiGI and HDMI to a large margin (e.g. 21.39 v.s. 16.12 v.s. 15.37 on F1@10). 
This result again demonstrates the necessity of tailoring methods for bipartite graphs and the effectiveness of COIN.

% \begin{table}[t]
%     \centering
%     \tiny
%     \caption{Performance (\%) of NMI for Co-Clustering}
%     \begin{tabular}{cccc}
%         \hline
%         Method & WebKB & Wisconsin & IMDB\\
%         \hline
%         K-Means & &  \\
%         SCC & & & \\
%         SBC & & & \\
%         CCMod & & & \\
%         DRCC & & & \\
%         CCInfo & & & \\
%         SCMK & & & \\
%         DeepCC & & & \\
%         \hline
%         COIN & & & \\
%         \hline
%     \end{tabular}
%     \label{tab:ml_10m}
% \end{table}

\paragraph{Co-Clustering.}
The proposed COIN can directly produce the cluster probabilities for the given node $u$ or $v$.
We assign the cluster index with the highest probability as the cluster assignment for the given node.
The experimental results are presented in Table \ref{tab:co_clustering}.
Among all the baseline methods, the deep learning method (DeepCC) achieves the highest NMI scores on all datasets, showing the power of deep neural networks.
COIN has further improvements over DeepCC, indicating the effectiveness of the proposed COIN for discovering the clusters.

\begin{table}[t]
    \centering
    \scriptsize
    \setlength\tabcolsep{2pt} % default value: 6pt
    \caption{Performance (\%) of top-K recommendation on ML-100K}
    \begin{tabular}{ccccccccccc}
        \hline
        Method & F1@10 & NDCG@3 & NDCG@5 & NDCG@10 & MAP@3 & MAP@5 & MAP@10 & MRR@3 & MRR@5 & MRR@10\\
        \hline
        DeepWalk \cite{perozzi2014deepwalk} & 14.20 & 7.17 & 9.32 & 13.13 & 2.72 & 3.54 & 4.92 & 43.86 & 46.83 & 48.75\\
        LINE \cite{tang2015line} & 13.71 & 6.52 & 8.57 & 12.37 & 2.45 & 3.26 & 4.67 & 44.16 & 44.37 & 46.30\\
        Node2vec \cite{grover2016node2vec} & 14.13 & 7.69 & 9.91 & 13.41 & 3.07 & 3.90 & 5.19 & 44.80 & 48.02 & 49.78\\
        VGAE \cite{kipf2016variational} & 11.38 & 6.43 & 8.18 & 10.93 & 2.35 & 2.95 & 3.94 & 39.39 & 42.32 & 43.68\\
        GraphCL \cite{you2020graph} & 19.46 & 10.13 & 13.24 & 18.17 & 4.17 & 5.65 & 8.04 & 58.04 & 60.67 & 61.97 \\
        HDI \cite{jing2021hdmi} & 19.44 & 9.76 & 12.86 & 18.01 & 4.08 & 5.56 & 8.00 & 55.82 & 58.47 & 60.01\\
        Metapath2vec \cite{dong2017metapath2vec} & 14.11 & 7.88 & 9.87 & 13.35 & 2.85 & 3.71 & 5.08 & 45.49 & 48.74 & 49.83\\
        DMGI \cite{park2020unsupervised} & 19.58 & 10.16 & 13.13 & 18.31 & 3.98 & 5.33 & 7.82 & 59.33 & 61.37 & 62.71\\
        HDMI \cite{jing2021hdmi} & 20.51 & 11.07 & 14.32 & 19.42 & 4.59 & 6.18 & 8.67 & 62.25 & 64.38 & 65.44\\
        \hline
        PinSage \cite{ying2018graph} & 21.68 & 10.95 & 14.51 & 20.27 & 4.52 & 6.18 & 9.13 & 62.56 & 64.77 & 65.76\\
        BiNE \cite{gao2018bine} & 14.83 & 7.69 & 9.96 & 13.79 & 2.87 & 3.80 & 5.24 & 48.14 & 50.94 & 52.51\\
        GC-MC \cite{berg2017graph} & 20.65 & 10.88 & 13.87 & 19.21 & 4.41 & 5.84 & 8.43 & 60.60 & 62.21 & 63.53\\
        IGMC \cite{zhang2019inductive} & 18.81 & 9.21 & 12.20 & 17.27 & 3.50 & 4.82 & 7.18 & 56.89 & 59.13 & 60.46\\
        NeuMF \cite{he2017neural} & 17.03 & 8.87 & 11.38 & 15.89 & 3.46 & 4.54 & 6.45 & 54.42 & 56.39 & 57.79\\
        NGCF \cite{wang2019neural} & 21.64 & 11.03 & 14.49 & 20.29 & 4.49 & 6.15 & 9.11 & 62.56 & 64.62 & 65.55\\
        BiGI \cite{cao2021bipartite} & 23.36 & 12.50 & 15.92 & 22.14 & 5.41 & 7.15 & 10.50 & 66.01 & 67.70 & 68.78\\
        \hline
        COIN & \textbf{24.78} & \textbf{13.48} & \textbf{17.37} & \textbf{23.62} & \textbf{5.71} & \textbf{7.82} & \textbf{11.34} & \textbf{70.58} & \textbf{72.14} & \textbf{72.76}\\
        \hline
    \end{tabular}
    \label{tab:ml_100k}
\end{table}
\begin{table}[t]
    \centering
    \scriptsize
    \setlength\tabcolsep{2pt} % default value: 6pt
    \caption{Performance (\%) of top-K recommendation on ML-10M}
    \begin{tabular}{ccccccccccc}
        \hline
        Method & F1@10 & NDCG@3 & NDCG@5 & NDCG@10 & MAP@3 & MAP@5 & MAP@10 & MRR@3 & MRR@5 & MRR@10\\
        \hline
        DeepWalk \cite{perozzi2014deepwalk} & 7.25 & 3.12 & 4.39 & 6.50 & 1.12 & 1.65 & 2.55 & 19.14 & 20.97 & 22.45\\
        LINE \cite{tang2015line} & 6.93 & 3.07 & 4.21 & 6.24 & 1.09 & 1.55 & 2.37 & 19.69 & 21.54 & 23.08\\
        Node2vec \cite{grover2016node2vec} & 6.36 & 2.82 & 3.84 & 5.71 & 1.00 & 1.40 & 2.14 & 18.10 & 19.83 & 21.32\\
        VGAE \cite{kipf2016variational} & 11.82 & 5.00 & 6.97 & 10.61 & 1.88 & 2.79 & 4.65 & 34.75 & 37.13 & 39.00\\
        GraphCL \cite{you2020graph} & 14.88 & 7.58 & 10.00 & 14.02 & 2.99 & 4.24 & 6.51 & 47.65 & 49.55 & 50.71 \\
        HDI \cite{jing2021hdmi} & 13.51 & 7.17 & 9.27 & 12.88 & 2.85 & 3.98 & 6.05 & 45.46 & 46.93 & 48.04\\
        Metapath2vec \cite{dong2017metapath2vec} & 8.28 & 3.26 & 4.66 & 7.21 & 1.18 & 1.79 & 2.98 & 19.99 & 21.92 & 23.50\\
        DMGI \cite{park2020unsupervised} & 12.52 & 6.03 & 8.09 & 11.69 & 2.15 & 3.04 & 4.77 & 42.78 & 44.86 & 46.08\\
        HDMI \cite{jing2021hdmi} & 15.37 & 8.11 & 10.58 & 14.68 & 3.22 & 4.53 & 6.84 & 51.01 & 52.80 & 54.02 \\
        \hline
        PinSage \cite{ying2018graph} & 14.93 & 7.53 & 10.07 & 14.14 & 2.70 & 3.81 & 5.85 & 45.72 & 47.58 & 48.96\\
        GC-MC \cite{berg2017graph} & 14.74 & 7.05 & 9.42 & 13.73 & 2.58 & 3.68 & 5.88 & 48.07 & 49.95 & 51.18\\
        IGMC \cite{zhang2019inductive} & 13.68 & 6.58 & 8.70 & 12.78 & 2.41 & 3.32 & 5.22 & 45.57 & 47.82 & 49.29\\
        NeuMF \cite{he2017neural} & 13.91 & 6.58 & 8.92 & 12.93 & 2.38 & 3.41 & 5.34 & 45.82 & 48.14 & 49.57\\
        NGCF \cite{wang2019neural} & 15.11 & 7.21 & 9.67 & 14.01 & 2.67 & 3.84 & 6.16 & 48.19 & 50.15 & 51.33\\
        BiGI \cite{cao2021bipartite} & 16.12 & 7.96 & 10.41 & 15.25 & 3.02 & 4.31 & 6.77 & 49.86 & 50.66 & 51.70\\
        \hline
        COIN & \textbf{21.39} & \textbf{10.88} & \textbf{14.34} & \textbf{20.19} & \textbf{4.40} & \textbf{6.27} & \textbf{9.70} & \textbf{62.92} & \textbf{64.59} & \textbf{65.53}\\
        \hline
    \end{tabular}
    \label{tab:ml_10m}
\end{table}

\subsection{Ablation Study}
We conduct the ablation study on the Wikipedia datasets, the results of which are shown in Table \ref{tab:ablation}.
Firstly, we study the impact of the mutual information objective $I(K;L)$ by comparing the full model COIN with the version w/o $I(K;L)$. 
Evidently, the full model is significantly better than w/o $I(K;L)$, showing the importance of capturing the cluster-level information.
Secondly, we replace the MLP instantiation of the similarity function $\mathcal{S}$ in Equation \ref{eq:instance_wise_object} with commonly used cosine similarity and dot product.
It can be observed in Table \ref{tab:ablation} that there are significant performance drops, which suggests that the manifolds of the embeddings are complex and MLP is a better way for calculating the similarity between nodes than simple cosine similarity or dot product.
In most of the prior works, to improve the performance, cosine similarity and dot product are normalized by a temperature parameter $\tau$. 
However, tuning the extra parameter $\tau$ is very time-consuming, which brings extra burdens for experiments.
Thirdly, we study the influence of the prior distribution $p(U, V)$. 
If we assume the random variables $U$ and $V$ are independent $p(U, V)=p(U)p(V)$, then there will be significant performance drops, as shown in Table \ref{tab:ablation}.
This observation indicates that the learning ability of COIN is bounded by $p(U, V)$, which empirically supports Theorem \ref{theorem:diff_mi}.

\begin{table}[h!]
    \centering
    \scriptsize
    \setlength\tabcolsep{3.5pt} % default value: 6pt
    \caption{Performance (\%) of NMI on co-clustering datasets}
    \begin{tabular}{cccccccccc}
        \hline
        Dataset & K-Means & SCC \cite{dhillon2001co} & SBC \cite{kluger2003spectral} & CCMod \cite{ailem2015co} & DRCC \cite{gu2009co} & CCInfo \cite{dhillon2003information} & SCMK \cite{kang2017twin} & DeepCC \cite{xu2019deep} & COIN\\
        \hline
        WebKB & 26.1 & 31.1 & 13.0 & 40.1 & 31.9 & 39.7 & 10.0 & 40.5 & \textbf{43.0}\\
        Wisconsin & 37.5 & 35.4 & 38.2 & 35.1 & 20.4 & 39.3 & 42.9 & 46.7 & \textbf{49.3}\\
        IMDB & 13.9 & 25.5 & 20.6 & 21.6 & 6.9 & 18.7 & 18.4 & 26.8 & \textbf{31.9}\\
        \hline
    \end{tabular}
    \label{tab:co_clustering}
    \vspace{-10pt}
\end{table}

\begin{table}[t]
    \centering
    \scriptsize
    \caption{Ablation study on Wikepedia}
    \begin{tabular}{ccccc}
        \hline
        \multirow{2}{*}{Method} & \multicolumn{2}{c}{Wiki (50\%)} & \multicolumn{2}{c}{Wiki (40\%)}\\
        \cmidrule(rl){2-3} \cmidrule(rl){4-5}
        & AUC-ROC & AUC-PR & AUC-ROC & AUC-PR\\
        \hline
        COIN & \textbf{95.30} & \textbf{95.05} & \textbf{94.53} & \textbf{94.44}\\
        \hline
        w/o $I(K;L)$ & 94.73 & 94.62 & 93.86 & 93.76\\
        $\mathcal{S}$=cosine similarity & 94.84 & 93.98 & 94.12 & 92.99\\
        $\mathcal{S}$=dot product & 94.76 & 94.56 & 94.08 & 93.76\\
        %InfoNCE$\rightarrow$ Triplet & & & & \\
        $p(U, V)=p(U)p(V)$ & 94.66 & 94.55 & 93.96 & 93.81\\
        % full network & & & & \\
        \hline
    \end{tabular}
    \label{tab:ablation}
\end{table}

\subsection{Sensitivity Experiments}
We study the impact of the number of clusters on the WebKB and Wisconsin datasets, and study the coefficient $\lambda$ for mutual information on the Wikipedia datasets.
As shown in Figure \ref{fig:web4_k}-\ref{fig:wisconsin_k}, COIN performs the best when the number of clusters is around the ground truth number of classes (4 for WebKB and 5 for Wisconsin).
As shown in Figure \ref{fig:wiki_5_mi_record}-\ref{fig:wiki_5_mi}, the performance of COIN only has small variances for different $\lambda$. 
COIN achieves the best performance when $\lambda=10$.

\subsection{Convergence of Mutual Information}
We present the mutual information of co-clusters $I(K;L)$ for each training epoch on Wiki (50\%), Wiki (40\%), ML-100K, and ML-10M in Figure \ref{fig:mi_record}.
$I(K;L)$ increases rapidly in the first 10 epochs for all the datasets, after which $I(K;L)$ moves slowly towards the limits.
The results show that COIN has a good convergence ability.

\begin{figure}[t]
\centering
\begin{subfigure}[b]{.24\textwidth}
\includegraphics[width=\linewidth]{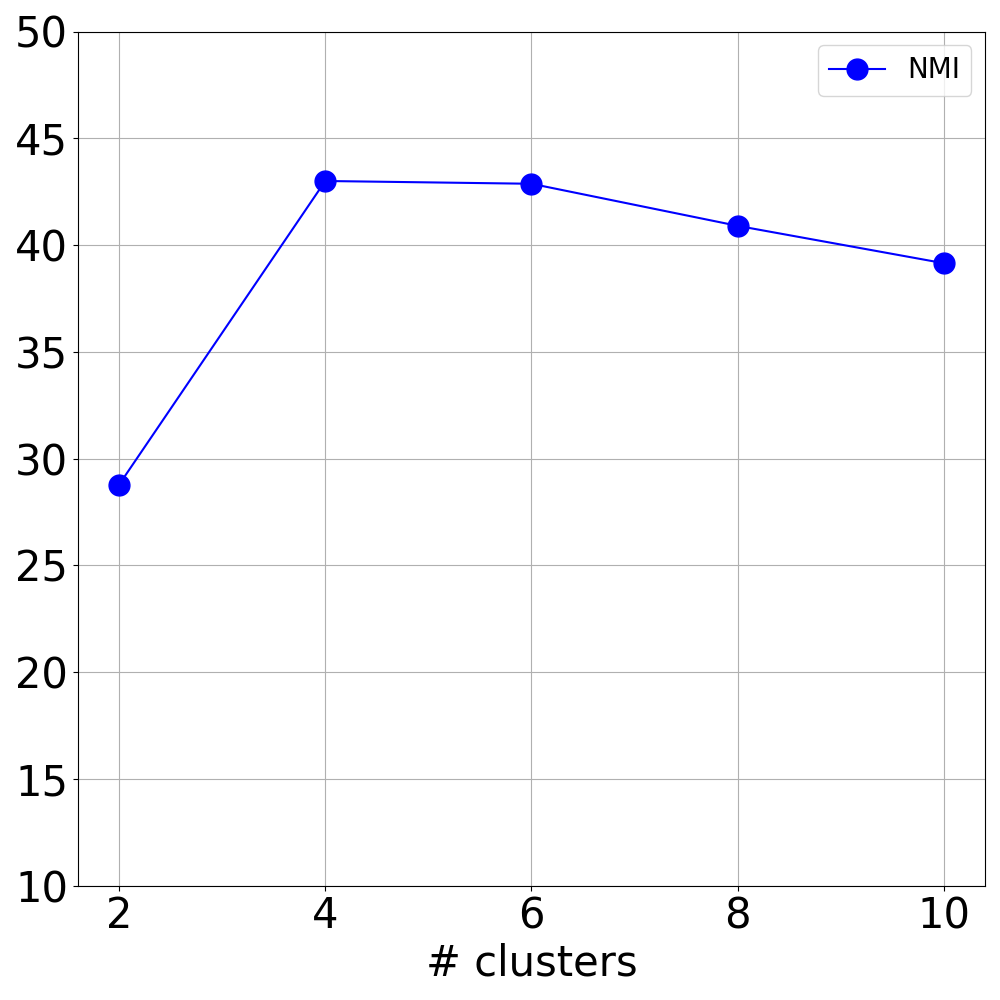}
 \caption{WebKB}\label{fig:web4_k}
\end{subfigure}\,
\begin{subfigure}[b]{.24\textwidth}
  \includegraphics[width=\linewidth]{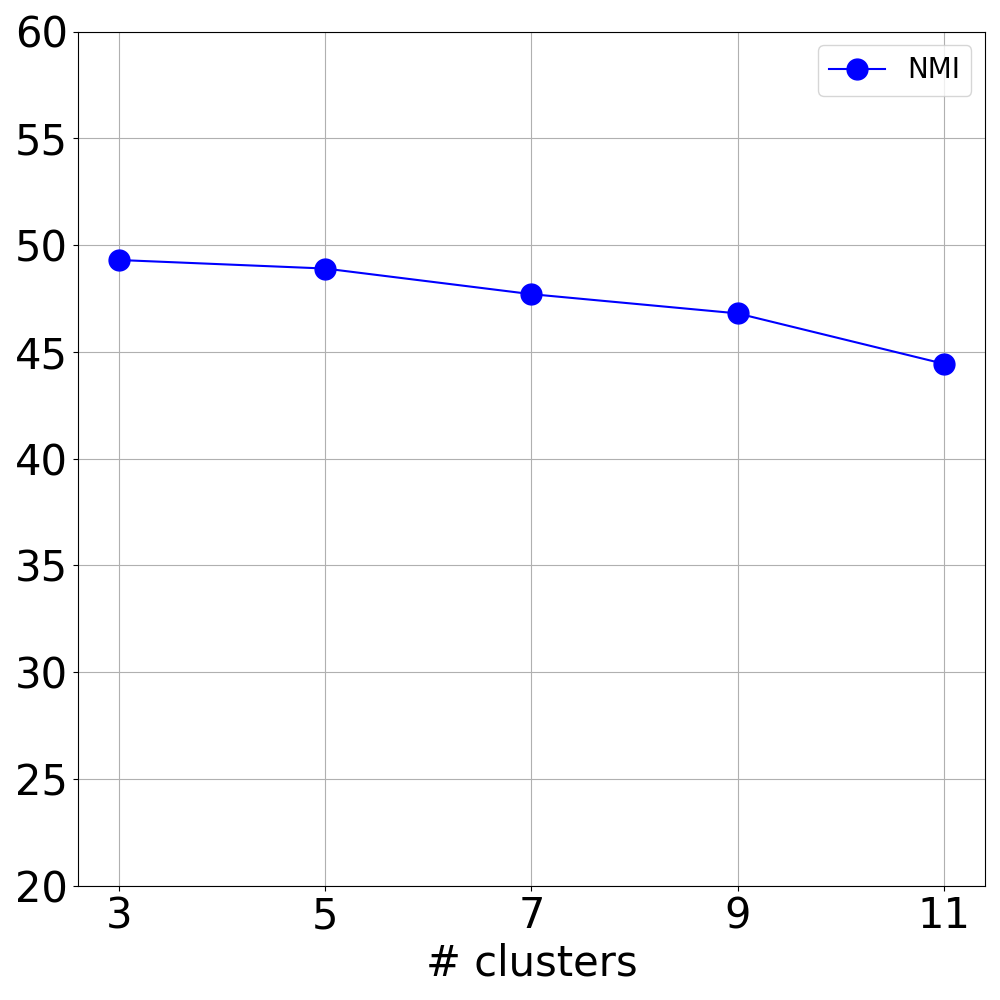}
  \caption{Wisconsin}\label{fig:wisconsin_k}
\end{subfigure}\,
\begin{subfigure}[b]{.24\textwidth}
  \includegraphics[width=\linewidth]{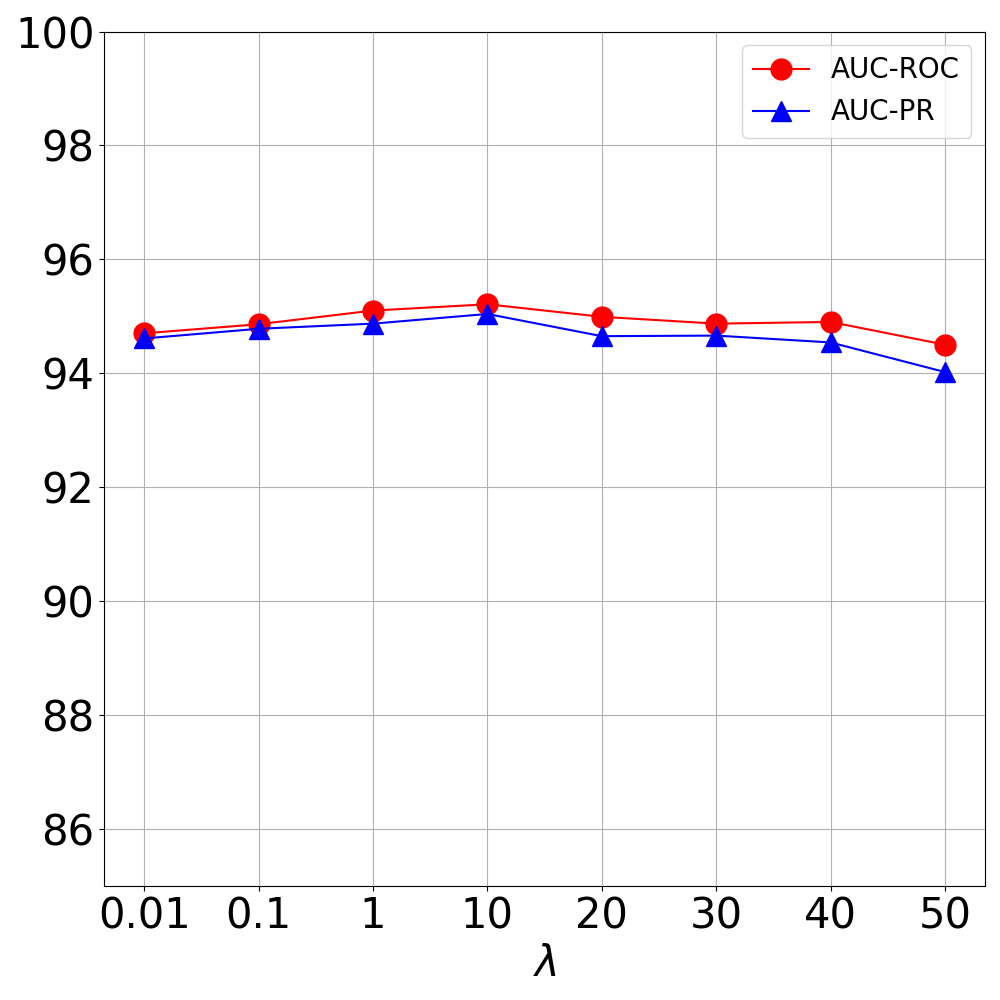}
  \caption{Wiki (50\%)}\label{fig:wiki_5_mi}
\end{subfigure}\,
\begin{subfigure}[b]{.24\textwidth}
  \includegraphics[width=\linewidth]{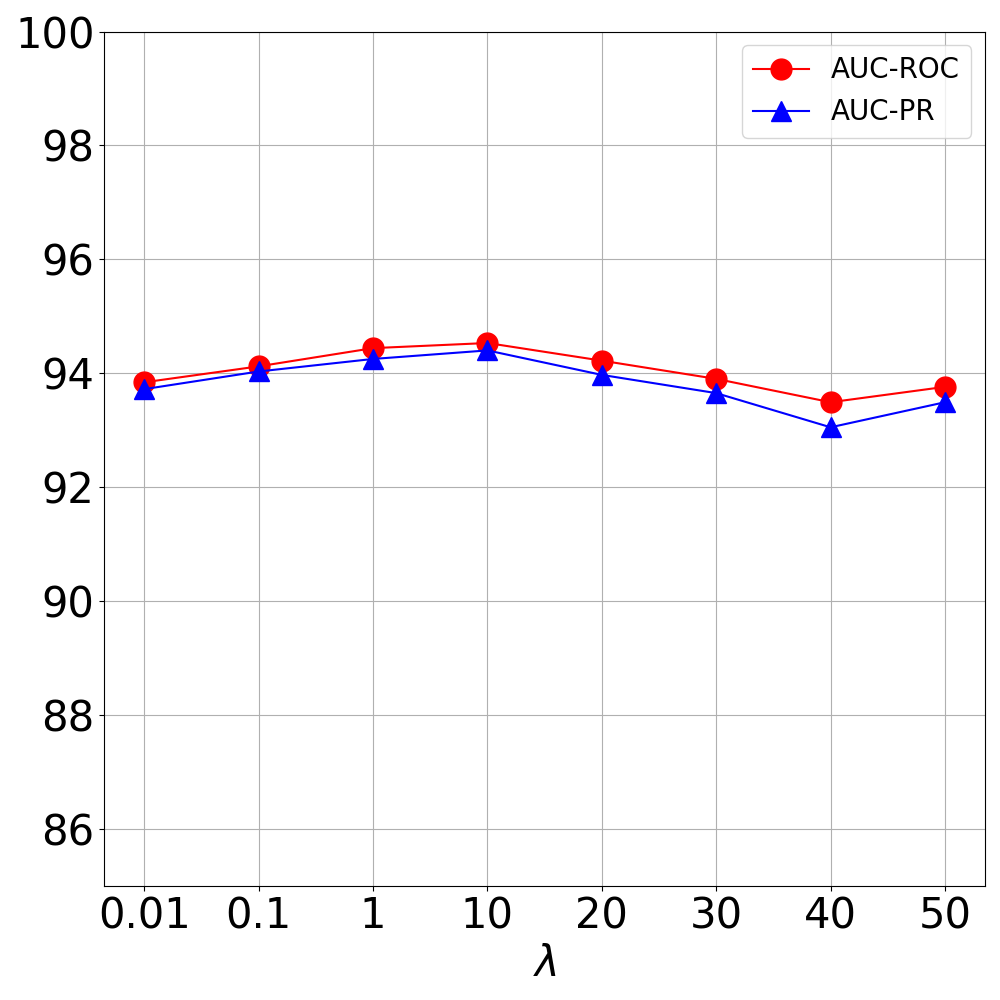}
  \caption{Wiki (40\%)}\label{fig:wiki_4_mi}
\end{subfigure}
\caption{Sensitivity experiments for (a)(b) cluster numbers and (c)(d) $\lambda$ of mutual information loss.}
\label{fig:sensitivity}
\end{figure}

\begin{figure}
\centering
\begin{subfigure}[b]{.24\textwidth}
\includegraphics[width=\linewidth]{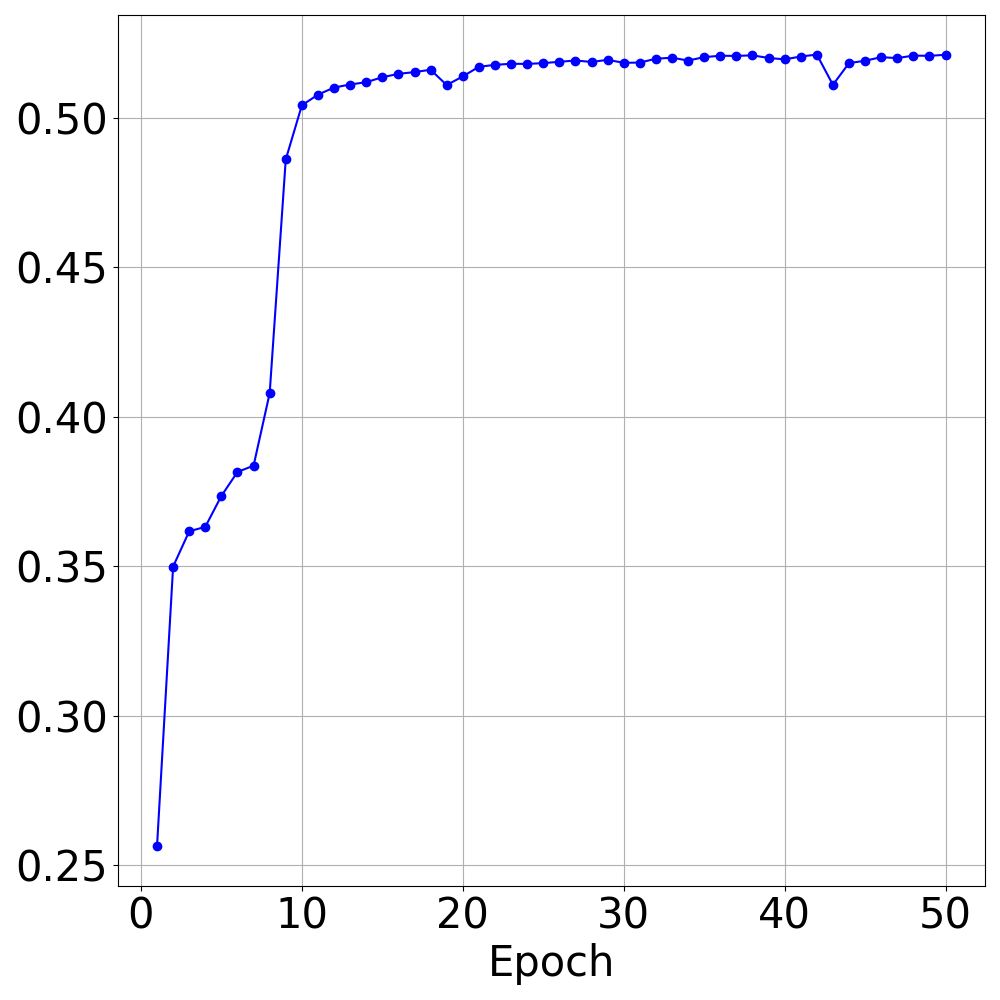}
 \caption{Wiki (50\%)}\label{fig:wiki_5_mi_record}
\end{subfigure}\,
\begin{subfigure}[b]{.24\textwidth}
  \includegraphics[width=\linewidth]{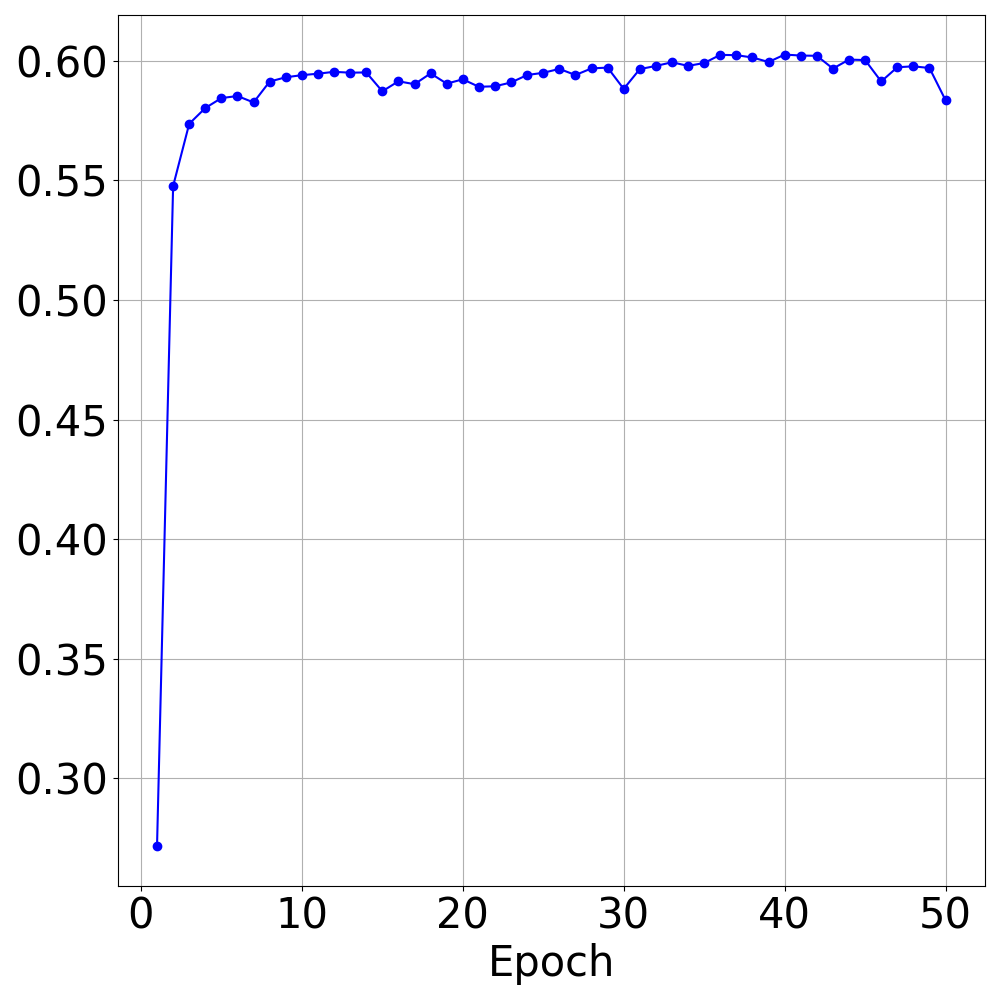}
  \caption{Wiki (40\%)}\label{fig:wiki_4_mi_record}
\end{subfigure}\,
\begin{subfigure}[b]{.24\textwidth}
  \includegraphics[width=\linewidth]{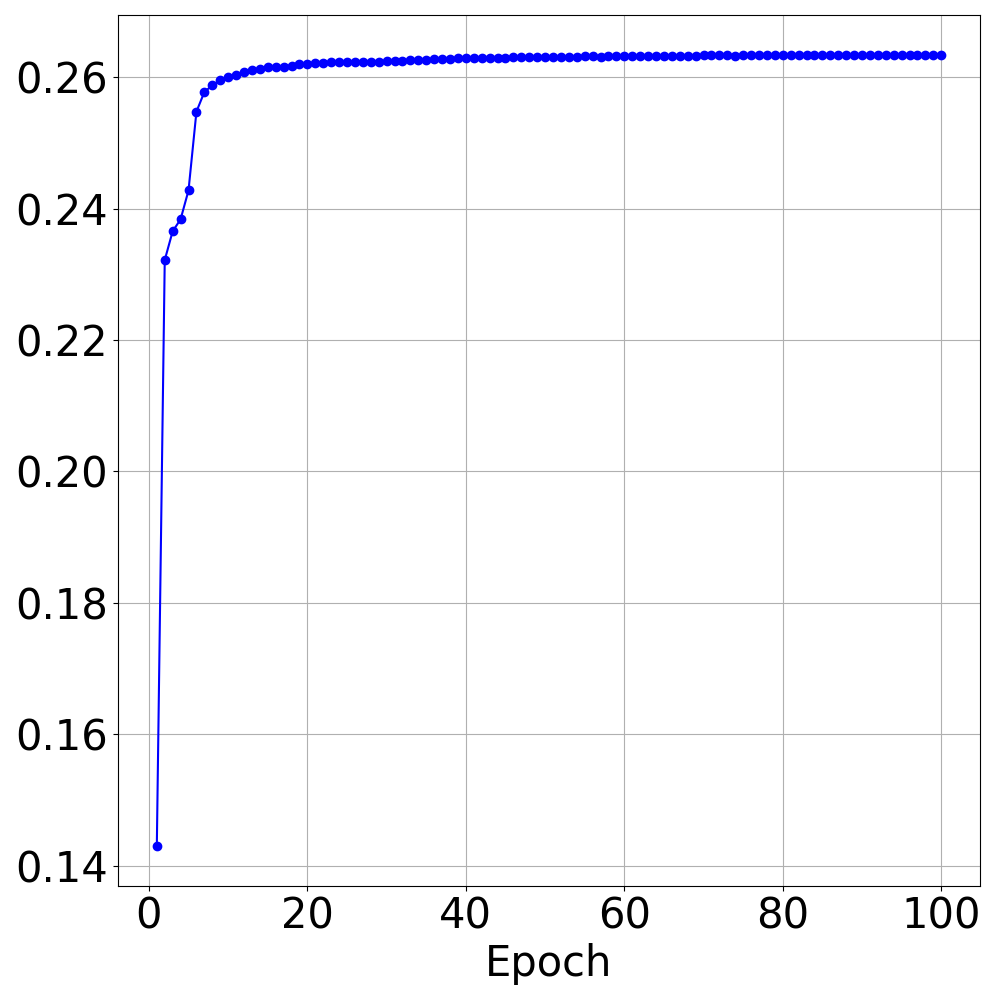}
  \caption{ML-100K}\label{fig:ml100k_mi_record}
\end{subfigure}\,
\begin{subfigure}[b]{.24\textwidth}
  \includegraphics[width=\linewidth]{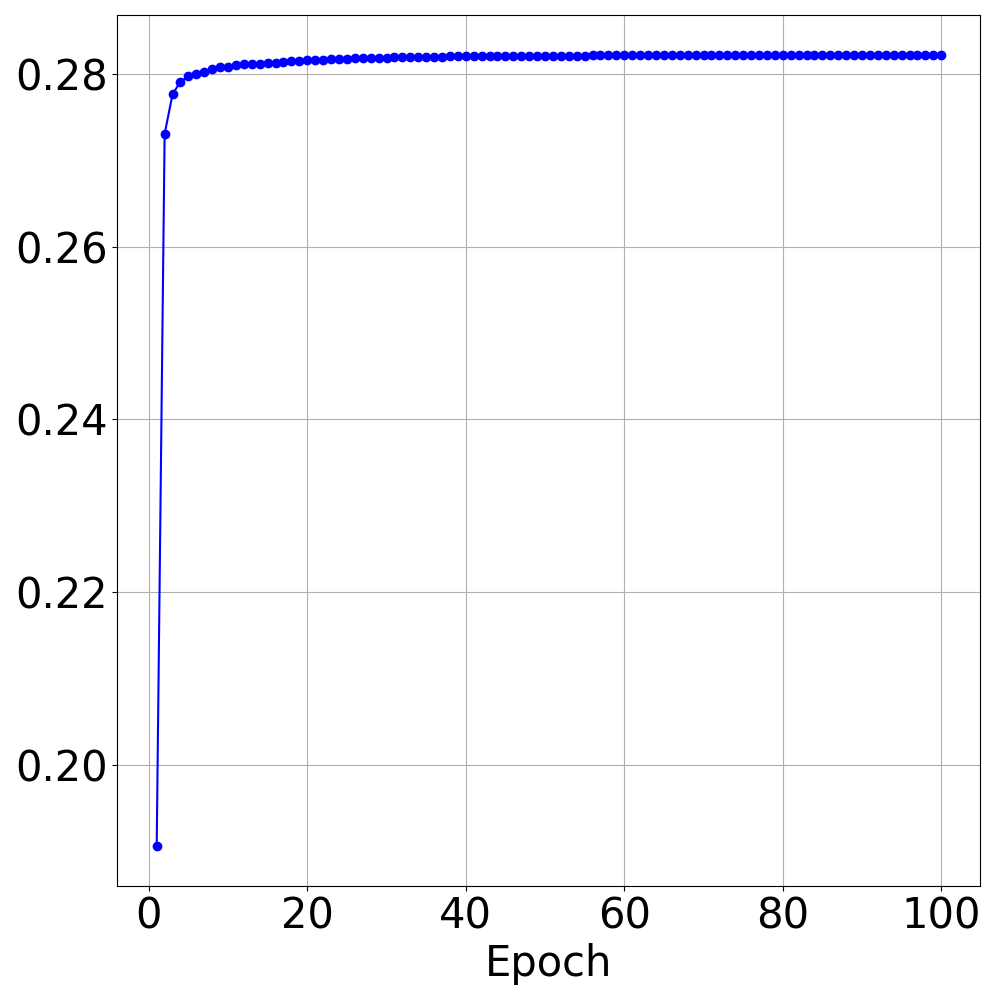}
  \caption{ML-10M}\label{fig:ml10m_mi_record}
\end{subfigure}
\caption{Mutual information of co-clusters v.s. the number of epochs}
\label{fig:mi_record}
\end{figure}

\section{Related Work}\label{sec:related_work}
\pdfoutput=1

% In this section, we briefly review the most relevant methods, including graph embedding methods, bipartite embedding methods, and co-clustering methods.

\paragraph{Graph Embedding.}
A fundamental challenge for graph learning is to extract informative node embeddings \cite{wu2021self, perozzi2014deepwalk, velickovic2019deep, zhu2021graph, park2020unsupervised, yan2021bright}.
DeepWalk \cite{perozzi2014deepwalk} and node2vec \cite{grover2016node2vec} use the random walk to sample node pairs and maximizes the similarities for random walk neighbors.
LINE \cite{tang2015line} and SDNE  \cite{wang2016structural} consider both of the first-order and second-order proximity information when learning node embeddings.
VGAE \cite{kipf2016variational} and ARVGA \cite{pan2018adversarially} are graph auto-encoder methods which learn node embeddings by reconstructing the adjacency matrices.
Recent self-supervised graph learning methods extract node embeddings by optimizing contrastive objectives. 
DGI \cite{velickovic2019deep} maximizes the mutual information of local and global representation of graphs.
MVGRL \cite{hassani2020contrastive} maximizes mutual information of different views.
GMI \cite{peng2020graph} introduces the graphical mutual information.
GraphCL \cite{you2020graph} generates positive and negative node pairs via various augmentations.
GCA \cite{zhu2021graph} generates data augmentation adaptively according to pre-defined probabilities.
HDI \cite{jing2021hdmi} introduces a high-order mutual inforamtion objective.
All of the above methods are designed for homogeneous graphs.
Metapath2vec \cite{dong2017metapath2vec} is a random walk based method, which extends DeepWalk and node2vec to heterogeneous graphs.
Recently, DMGI \cite{park2020unsupervised} and HDMI \cite{jing2021hdmi} extends DGI and HDI to multiplex heterogeneous graphs.
Although these methods have achieved impressive scores on downstream tasks, they are not tailored for bipartite graphs and thus usually have sub-optimal performance compared with bipartite graph embedding methods. 

\paragraph{Bipartite Graph Embedding.}
Bipartite graphs have been widely used to model interactions between two disjoint sets of nodes, such as the user-item interaction in recommender systems.
Inspired by random walk based graph embeddings methods \cite{perozzi2014deepwalk, grover2016node2vec}, BiNE \cite{gao2018bine} uses both explicit connections between nodes and random walks extracted implicit relationships between nodes to learn node embeddings. 
IGE \cite{zhang2017learning} learns node embeddings based on the direct connection between nodes and the attributes of the edges.
GC-MC \cite{berg2017graph} trains a graph convolutional network and learns node embeddings by recovering the link between two nodes.
PinSage \cite{ying2018graph} is a web-scale method which combines graph convolutional network with random walks.
NeuMF \cite{he2017neural} is a neural network based collaborative filtering method.
NGCF \cite{wang2019neural} incorporates the high-order collaborative signals to improve the quality of the node embeddings.
Recently, contrastive learning has been applied to learn bipartite graph embeddings.
EGNL \cite{yang2021enhanced} is a collaborative filtering method, which maximizes the mutual information of local and global represents of the learned enhanced user-item graph.
BiGI \cite{cao2021bipartite} uses triplet loss to learn the local connectivity of nodes, and captures the global information of the graph by maximizing the mutual information of local and global representations.
Different from BiGI, the proposed COIN further captures cluster-level information.

\paragraph{Co-Clustering.}
Co-clustering aims to partition rows and columns of a co-occurrence matrix into co-clusters simultaneously \cite{dhillon2001co, dhillon2003information}.
In practice, co-clustering algorithms usually have impressive improvements over traditional one-way clustering algorithms \cite{xu2019deep}.
InfoCC \cite{dhillon2003information} is an information-theoretic method, which co-clusters document-word interaction matrices by a mutual information based objective.
SCC \cite{dhillon2001co} and SBC \cite{kluger2003spectral} are based on spectral analysis.
DRCC \cite{gu2009co} is a semi-nonnegative matrix tri-factorization method with geometric regularization.
BCC \cite{shan2008bayesian}, LDCC \cite{shafiei2006latent} and MPCCE \cite{wang2011nonparametric} are Bayesian approaches.
SOBG \cite{nie2017learning} performs co-clustering by learning a new graph similarity matrix.
CCMod \cite{ailem2015co} obtains co-clusters by maximizing graph modularity.
SCMK \cite{kang2017twin} is a kernel based method.
DeepCC \cite{xu2019deep} is the first deep learning based co-clustering method, which uses auto-encoders for dimension reduction and a variant of the Gaussian mixture model to obtain cluster assignments.
Different from DeepCC, which is specifically designed for the co-clustering task, COIN is a self-supervised learning method, which is able to deal with various downstream tasks.

\section{Conclusion}\label{sec:conclusion}
\pdfoutput=1

In this paper, we introduce a novel \underline{co}-cluster \underline{in}fomax (COIN) framework for self-supervised learning on bipartite graphs.
COIN is able to capture the cluster-level information of bipartite graphs by maximizing the mutual information of co-clusters.
There are two advantages of COIN.
On the one hand, COIN calculates the mutual information rather than estimates the mutual information as in prior works.
On the other hand, COIN is an end-to-end clustering framework which can be trained and optimized with other differentiable objectives.
Furthermore, we also provide the theoretical analysis of COIN.
We theoretically prove that COIN can maximize the mutual information of embeddings of the two types of nodes, and COIN is upper-bounded by the prior distribution assumed over the bipartite graph.
Finally, extensive empirical evaluation on diverse benchmark datasets and downstream tasks demonstrates the effectiveness of the proposed COIN.

% \begin{ack}
% \end{ack}

% \section*{References}
% \bibliographystyle{unsrt}
\bibliographystyle{plain}
\bibliography{ref}

\newpage
\appendix
\pdfoutput=1

\section{Theoretical Analysis}\label{appendix:theory}

\begin{lem}\label{lemma_appendix}
For COIN, the following inequality holds:
\begin{equation}
    \log p(k)p(l) \geq \sum_{\mathbf{u}, \mathbf{v}}p(\mathbf{u}, \mathbf{v}|k, l)\log\frac{p(\mathbf{u}, k)p(\mathbf{v}, l)}{p(\mathbf{u}, \mathbf{v}|k, l)}
\end{equation}
where $k\in\{1, \cdots, N_K\}$, $l\in\{1, \cdots, N_L\}$, $\mathbf{u}\in\mathbf{U}$, $\mathbf{v}\in\mathbf{V}$.
\end{lem}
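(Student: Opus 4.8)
The plan is to recognize the right-hand side as an expectation of a log-ratio taken under the conditional distribution $p(\mathbf{u}, \mathbf{v}|k, l)$, and then apply Jensen's inequality. The key observation is that for fixed $k$ and $l$, the quantity $p(\mathbf{u}, \mathbf{v}|k, l)$ is a genuine probability distribution over the pairs $(\mathbf{u}, \mathbf{v})$, so $\sum_{\mathbf{u}, \mathbf{v}} p(\mathbf{u}, \mathbf{v}|k, l) = 1$. Writing the summand in the form $p(\mathbf{u}, \mathbf{v}|k, l)\log\big(f(\mathbf{u}, \mathbf{v})/p(\mathbf{u}, \mathbf{v}|k, l)\big)$ with $f(\mathbf{u}, \mathbf{v}) = p(\mathbf{u}, k)p(\mathbf{v}, l)$ exposes exactly the structure to which concavity of $\log$ applies.

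First I would invoke Jensen's inequality for the concave function $\log$: since the weights $p(\mathbf{u}, \mathbf{v}|k, l)$ are nonnegative and sum to one,
\begin{equation}
\sum_{\mathbf{u}, \mathbf{v}} p(\mathbf{u}, \mathbf{v}|k, l)\log\frac{p(\mathbf{u}, k)p(\mathbf{v}, l)}{p(\mathbf{u}, \mathbf{v}|k, l)} \leq \log\sum_{\mathbf{u}, \mathbf{v}} p(\mathbf{u}, \mathbf{v}|k, l)\cdot\frac{p(\mathbf{u}, k)p(\mathbf{v}, l)}{p(\mathbf{u}, \mathbf{v}|k, l)}.
\end{equation}
The conditional distribution cancels inside the logarithm, leaving $\log\sum_{\mathbf{u}, \mathbf{v}} p(\mathbf{u}, k)p(\mathbf{v}, l)$. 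Next I would factor the double sum into a product of independent sums, $\big(\sum_{\mathbf{u}} p(\mathbf{u}, k)\big)\big(\sum_{\mathbf{v}} p(\mathbf{v}, l)\big)$, and marginalize each factor, using $\sum_{\mathbf{u}} p(\mathbf{u}, k) = p(k)$ and $\sum_{\mathbf{v}} p(\mathbf{v}, l) = p(l)$. This yields $\log p(k)p(l)$ on the right, which is precisely the claimed bound.

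There is no deep obstacle here: the argument is a one-line Jensen estimate followed by marginalization. The only points that require care are confirming that $p(\mathbf{u}, \mathbf{v}|k, l)$ is correctly normalized over $(\mathbf{u}, \mathbf{v})$ so that Jensen applies with these weights, and verifying that the double sum genuinely factors into the two marginals $p(k)$ and $p(l)$ — which holds because the summand $p(\mathbf{u}, k)p(\mathbf{v}, l)$ separates into a function of $(\mathbf{u}, k)$ times a function of $(\mathbf{v}, l)$. Monotonicity of $\log$ is implicitly used to preserve the direction of the inequality through the final substitution, and strict concavity would identify the equality case as $p(\mathbf{u}, \mathbf{v}|k, l)\propto p(\mathbf{u}, k)p(\mathbf{v}, l)$, which is consistent with the factorization assumptions established for COIN.
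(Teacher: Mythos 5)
Your proposal is correct and is essentially the paper's own argument: the paper starts from $\log p(k)p(l) = \log\sum_{\mathbf{u},\mathbf{v}}p(\mathbf{u},k)p(\mathbf{v},l)\frac{p(\mathbf{u},\mathbf{v}|k,l)}{p(\mathbf{u},\mathbf{v}|k,l)}$ and applies Jensen's inequality downward, which is exactly your Jensen step with the marginalization identity $p(k)p(l)=\bigl(\sum_{\mathbf{u}}p(\mathbf{u},k)\bigr)\bigl(\sum_{\mathbf{v}}p(\mathbf{v},l)\bigr)$ read in the opposite direction. Your remarks on normalization of the weights and the equality case are correct but add nothing beyond the paper's proof in substance.
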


\begin{proof}
We have $p(k)=\sum_{\mathbf{u}}p(\mathbf{u}, k)$, $p(l)=\sum_{\mathbf{v}}p(\mathbf{v}, l)$, and thus $p(k)p(l)=\sum_{\mathbf{u}, \mathbf{v}}p(\mathbf{u}, k)p(\mathbf{v}, l)$.
As a result, we have:
\begin{equation}
    \log p(k)p(l) = \log\sum_{\mathbf{u}, \mathbf{v}}p(k , \mathbf{u})p(l, \mathbf{v})\frac{p(\mathbf{u}, \mathbf{v}|k, l)}{p(\mathbf{u}, \mathbf{v}|k, l)} \geq \sum_{\mathbf{u}, \mathbf{v}}p(\mathbf{u}, \mathbf{v}|k, l)\log\frac{p(\mathbf{u}, k)p(\mathbf{v}, l)}{p(\mathbf{u}, \mathbf{v}|k, l)}
\end{equation}
where the inequality holds according to Jensen's inequality.
\end{proof}

\begin{thm}[Variational Bound]\label{theorem_appendix:variational_bound}
The mutual information $I(\mathbf{U};\mathbf{V})$ of embeddings $\mathbf{U}$ and $\mathbf{V}$ is lower-bounded by the mutual information of co-clusters $I(K;L)$:
\begin{equation}\label{eq_appendix:theorem_bound}
    I(K;L)\leq I(\mathbf{U};\mathbf{V})
\end{equation}
\end{thm}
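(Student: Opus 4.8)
The plan is to start from the definition $I(K;L)=\sum_{k,l}p(k,l)\log\frac{p(k,l)}{p(k)p(l)}$ and split the logarithm as $\log p(k,l)-\log p(k)p(l)$. The term $\log p(k)p(l)$ is exactly what Lemma~\ref{lemma_appendix} controls: the lemma furnishes a variational lower bound on $\log p(k)p(l)$, so substituting that bound produces an \emph{upper} bound on $I(K;L)$. This is the single place where an inequality is introduced, and its direction is what matters — I would check carefully that the minus sign in front of $\log p(k)p(l)$ flips the lemma's $\geq$ into the $\leq$ that the theorem requires.

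Next I would homogenize the two logarithmic terms so that they can be merged into one summand. Since $\sum_{\mathbf{u},\mathbf{v}}p(\mathbf{u},\mathbf{v}|k,l)=1$, I can rewrite $\log p(k,l)$ as $\sum_{\mathbf{u},\mathbf{v}}p(\mathbf{u},\mathbf{v}|k,l)\log p(k,l)$, so that after applying the lemma both surviving pieces carry the common weight $p(k,l)\,p(\mathbf{u},\mathbf{v}|k,l)$. Collecting them yields
$$I(K;L)\leq\sum_{k,l}p(k,l)\sum_{\mathbf{u},\mathbf{v}}p(\mathbf{u},\mathbf{v}|k,l)\log\frac{p(k,l)\,p(\mathbf{u},\mathbf{v}|k,l)}{p(\mathbf{u},k)p(\mathbf{v},l)},$$
which I will denote by $R$.

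From here the remaining work is exact bookkeeping. I would merge $p(k,l)\,p(\mathbf{u},\mathbf{v}|k,l)=p(\mathbf{u},\mathbf{v},k,l)$ to fold the two sums into a single expectation over the full joint, then rewrite the summand by factoring the numerator as $p(\mathbf{u},\mathbf{v},k,l)=p(k,l|\mathbf{u},\mathbf{v})p(\mathbf{u},\mathbf{v})$ and the denominator as $p(\mathbf{u},k)=p(\mathbf{u})p(k|\mathbf{u})$, $p(\mathbf{v},l)=p(\mathbf{v})p(l|\mathbf{v})$. The decisive simplification is the modelling assumption that the two cluster networks act independently, $p(k,l|\mathbf{u},\mathbf{v})=p(k|\mathbf{u})p(l|\mathbf{v})$; this cancels all cluster factors and leaves $\log\frac{p(\mathbf{u},\mathbf{v})}{p(\mathbf{u})p(\mathbf{v})}$ inside the sum. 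Marginalizing over $k,l$ then collapses $R$ to $\sum_{\mathbf{u},\mathbf{v}}p(\mathbf{u},\mathbf{v})\log\frac{p(\mathbf{u},\mathbf{v})}{p(\mathbf{u})p(\mathbf{v})}=I(\mathbf{U};\mathbf{V})$, and chaining this identity with the earlier inequality delivers the claim.

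The main obstacle is the bound itself rather than the algebra: every manipulation after invoking Lemma~\ref{lemma_appendix} is an exact equality, so the entire gap between $I(K;L)$ and $I(\mathbf{U};\mathbf{V})$ is produced by the single Jensen step hidden inside the lemma. I would therefore treat the correctness of the lemma's inequality direction, together with the legitimacy of the factorization $p(k,l|\mathbf{u},\mathbf{v})=p(k|\mathbf{u})p(l|\mathbf{v})$, as the two points demanding the most scrutiny.
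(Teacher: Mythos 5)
Your proposal is correct and follows essentially the same route as the paper's own proof: you invoke Lemma~\ref{lemma_appendix} on the $-\log p(k)p(l)$ term to obtain exactly the paper's intermediate quantity $R$, then merge $p(k,l)p(\mathbf{u},\mathbf{v}|k,l)=p(\mathbf{u},\mathbf{v},k,l)$, factor the joint and the marginals, cancel via $p(k,l|\mathbf{u},\mathbf{v})=p(k|\mathbf{u})p(l|\mathbf{v})$, and marginalize over $k,l$ to reach $I(\mathbf{U};\mathbf{V})$. Your added observations --- that the single inequality comes from the Jensen step inside the lemma and that the conditional-independence factorization is the modelling assumption deserving scrutiny --- are accurate and consistent with the paper's argument.
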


\begin{proof}
According to the definition of mutual information and Lemma \ref{lemma_appendix}, we have:
\begin{equation}\label{eq_appendix:thorem_1_1}
    I(K; L)= \sum_{k, l}p(k, l)\frac{\log p(k, l)}{\log p(k)p(l)} \leq \sum_{k, l}p(k, l)\sum_{\mathbf{u}, \mathbf{v}}p(\mathbf{u}, \mathbf{v}|k, l)\log \frac{p(k, l)p(\mathbf{u}, \mathbf{v}|k, l)}{p(\mathbf{u}, k)p(\mathbf{v}, l)} = R
\end{equation}
By merging $p(k, l)$ with $p(\mathbf{u}, \mathbf{v}|k, l)$, we have:
\begin{equation}
    R = \sum_{\mathbf{u}, \mathbf{v}, k, l}p(\mathbf{u}, \mathbf{v}, k, l)\log \frac{p(\mathbf{u}, \mathbf{v}, k, l)}{p(\mathbf{u}, k)p(\mathbf{v}, l)} = \sum_{\mathbf{u}, \mathbf{v}, k, l}p(\mathbf{u}, \mathbf{v}, k, l)\log \frac{p(k, l|\mathbf{u}, \mathbf{v})p(\mathbf{u}, \mathbf{v})}{p(\mathbf{u})p(k|\mathbf{u})p(\mathbf{v})p(l|\mathbf{v})}
\end{equation}
Since cluster networks $\mathcal{C}_U$ and $\mathcal{C}_V$ have separate sets of parameters and inputs, therefore, it is natural to have $p(k, l|\mathbf{u}, \mathbf{v})=p(k|\mathbf{u})p(l|\mathbf{v})$.
As a result, we have:
\begin{equation}
    R = \sum_{\mathbf{u}, \mathbf{v}, k, l}p(\mathbf{u}, \mathbf{v}, k, l)\log \frac{p(\mathbf{u}, \mathbf{v})}{p(\mathbf{u})p(\mathbf{v})} = \sum_{\mathbf{u}, \mathbf{v}}p(\mathbf{u}, \mathbf{v})\log \frac{p(\mathbf{u}, \mathbf{v})}{p(\mathbf{u})p(\mathbf{v})} = I(\mathbf{U};\mathbf{V})
\end{equation}
Take the above result back to Inequality \eqref{eq_appendix:thorem_1_1}, and we will obtain Inequality \eqref{eq_appendix:theorem_bound}. 
\end{proof}

\begin{thm}[Mutual Information Difference]\label{theorem_appendix:diff_mi}
Given a soft co-clustering function $\phi=(\phi_U, \phi_V)$, where $\phi_U$ and $\phi_V$ are deterministic functions mapping nodes $u$,$v$ to their cluster distributions $p(K|u)$ and $p(L|v)$, and the prior joint distribution $p(U, V)$ assumed on $U$, $V$, we have:
\begin{equation}
    I(U, V) - I(K;L) = D_{KL}(p(K,L,U, V)||q(K,L,U, V))
\end{equation}
where $D_{KL}$ denotes the KL-divergence, and $q(K,L,U,V)=p(K,L)p(U|K)p(V|L)$.
\end{thm}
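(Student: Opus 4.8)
The plan is to write both mutual informations as expectations under a single four-variable joint distribution and then show that their difference collapses, term by term, into the stated KL divergence. The foundation is the factorization
\begin{equation}
p(k,l,u,v) = p(u,v)\,p(k|u)\,p(l|v),
\end{equation}
which follows from the Markov chain $K-U-V-L$: the cluster assignments are generated by separate networks from the (deterministically encoded) nodes, so $p(k,l|u,v)=p(k|u)p(l|v)$, the same assumption already invoked in Theorem~\ref{theorem_appendix:variational_bound}. All marginals $p(k)$, $p(l)$, $p(k,l)$, $p(u|k)$, $p(v|l)$ appearing below are understood to be derived from this one joint.

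First I would lift each mutual information to a sum over all four variables by trivial marginalization: the integrand of $I(U;V)$ is constant in $(k,l)$ and that of $I(K;L)$ is constant in $(u,v)$, so both become sums of $p(k,l,u,v)$ against a log-ratio. Subtracting them and merging the two logarithms yields
\begin{equation}\label{eq:mi_diff_target}
I(U;V) - I(K;L) = \sum_{k,l,u,v} p(k,l,u,v)\,\log\frac{p(u,v)\,p(k)\,p(l)}{p(u)\,p(v)\,p(k,l)}.
\end{equation}

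Next I would expand the KL divergence on the right-hand side, substitute $q=p(k,l)p(u|k)p(v|l)$ together with the factorization of $p$, and apply Bayes' rule on each branch in the form $p(k|u)/p(u|k)=p(k)/p(u)$ and $p(l|v)/p(v|l)=p(l)/p(v)$. These two substitutions convert the KL integrand into precisely the log-ratio of \eqref{eq:mi_diff_target}, so the two quantities are literally the same sum and the claimed identity follows.

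I expect the only delicate point to be consistency of the marginals rather than any hard estimate: because the clustering is \emph{soft}, no variable may be collapsed into a deterministic function of another, which is exactly what distinguishes this argument from the hard-clustering derivation of \cite{dhillon2003information}. Once the joint is fixed and the two Bayes identities are in hand, both the mutual-information difference and the KL divergence reduce to the identical expression in \eqref{eq:mi_diff_target}, leaving no residual computation.
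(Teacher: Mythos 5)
Your proposal is correct and takes essentially the same route as the paper's proof: both reduce $I(U;V)-I(K;L)$ to a single four-variable sum with integrand $\log\frac{p(u,v)p(k)p(l)}{p(u)p(v)p(k,l)}$, then identify it with the KL integrand using the conditional independence $p(k,l|u,v)=p(k|u)p(l|v)$ and Bayes' rule on each branch. The only cosmetic difference is direction of the final algebra --- the paper rewrites the mutual-information integrand into the form $\log\frac{p(k,l,u,v)}{p(k,l)p(u|k)p(v|l)}$, whereas you expand the KL side to meet the mutual-information expression --- which changes nothing of substance.
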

\begin{proof}
According to the definition of mutual information, we have:
\begin{align}
    I(U;V) - I(K;L) &= \sum_{k, l, u, v}p(k, l, u, v)\log\frac{p(u, v)}{p(u)p(v)} - \sum_{k, l, u, v}p(k, l, u, v)\log\frac{p(k, l)}{p(k)p(l)}\\
    &=\sum_{k, l, u, v}p(k, l, u, v)\log\frac{p(u, v)p(k)p(l)}{p(u)p(v)p(k, l)}\label{eq:log_appendix}
\end{align}
Denoting the argument of the logarithm term as $\Delta$, we have:
\begin{equation}
    \Delta = \Delta\cdot\frac{p(k, l|u, v)}{p(k, l|u, v)} = \frac{p(k)p(l)}{p(u)p(v)p(k, l)}\cdot\frac{p(k, l, u, v)}{p(k|u)p(l|v)} = \frac{p(k, l, u, v)}{p(k, l)p(u|k)p(v|l)}
\end{equation}
In the second equality, $p(k, l|u, v)=p(k|u)p(l|v)$ comes from the fact that $\phi_U$ takes $U$ as the input and produces $p(K|U)$, and thus given $U$, $K$ is conditional independent to $V$ and $L$. Similarly, given $V$, $L$ is conditional independent to $U$ and $K$.
In the third equality, Bayes's rule is applied.

Take $\Delta$ back to Equation \eqref{eq:log_appendix}, we have:
\begin{equation}
    I(U;V) - I(K;L) = \sum_{k, l, u, v}p(k, l, u, v)\log\frac{p(k, l, u, v)}{p(k, l)p(u|k)p(v|l)} = D_{KL}(p||q)
\end{equation}
where $p=p(K,L,U,V)$ and $q=p(K,L)p(U|K)p(V|L)$. 
It is obvious that $\sum_{k, l, u, v}q=1$.
\end{proof}

\begin{table}[t]
    \centering
    \scriptsize
    \caption{Descriptions of datasets}
    \begin{tabular}{cccccccc}
        \hline
        Dataset & Task & Evaluation & $|U|$ & $|V|$ & $|E|$ & Density & \# Class\\
        \hline
        Wikipedia & Link Prediction & AUC-ROC, AUC-PR & 15,000 & 3,214 & 64,095 & 0.1\% & -\\
        \hline
        ML-100K & Top-K Recommendation & F1, NDCG, MAP, MRR & 943 & 1,682 & 100,000 & 6.3\% & -\\
        ML-10M & Top-K Recommendation & F1, NDCG, MAP, MRR & 69,878 & 10,677 & 10,000,054 & 1.3\% & -\\
        \hline
        WebKB & Co-Clustering & NMI & 4,199 & 1,000 & 342,882 & 8.2\% & 4\\
        Wisconsin & Co-Clustreing & NMI & 265 & 1703 & 25,479 & 5.6\% & 5\\
        IMDB & Co-Clustering & NMI & 617 & 1878 & 20,156 & 1.7\% & 17\\
        \hline
    \end{tabular}
    \label{tab:data_appendix}
\end{table}

\section{Datasets}\label{appendix:data}
We evaluate the proposed COIN on six public benchmark datasets with three different tasks.
The descriptions of the datasets are presented in Table \ref{tab:data_appendix}.

\paragraph{Wikipedia}
The Wikipedia dataset\footnote{\url{https://github.com/clhchtcjj/BiNE/tree/master/data/wiki}} contains the edit relationship between authors and pages, which is used for link prediction \cite{gao2018bine, cao2021bipartite}.
We use the data processed by \cite{cao2021bipartite}, which has two different splits 50\% and 40\% for training.

\paragraph{ML-100K}
% MovieLens  (ML-100K). Stable benchmark dataset. 100,000 ratings from 1000 users on 1700 movies.
This dataset\footnote{\url{https://grouplens.org/datasets/movielens/100k/}} \cite{harper2015movielens} is collected through the MovieLens \footnote{\url{https://movielens.org/}} website, which contains 100 thousand movie ratings from 943 users on 1682 movies. 
Each user has rated at least 20 movies, and the relation between users and items are binary.
We use the data processed by \cite{cao2021bipartite}.
This data is used for top-K recommendation.

\paragraph{ML-10M}
This dataset\footnote{\url{https://grouplens.org/datasets/movielens/10m/}}\cite{harper2015movielens} is collected through the MovieLens \footnote{\url{https://movielens.org/}} website, which contains 10 million movie ratings.
All users selected had rated at least 20 movies, and the relation between users and items are binary.
We use the data processed by \cite{cao2021bipartite}.
This data is used for top-K recommendation.

\paragraph{WebKB}
The WebKB dataset is about the information of the web pages, which is formulated as a document-keyword interaction bipartite graph. 
It contains 4 different classes. 
This dataset is processed by \cite{xu2019deep}, and it is used for co-clustering.

\paragraph{Wisconsin}
The Wisconsin dataset contains 265 documents with 5 classes, i.e. student, project, course, staff and faculty.
We use the version processed by \cite{xu2019deep}, and it is used for co-clustering.

\paragraph{IMDB}
The IMDB dataset is a document-keyword interaction bipartite graph, where documents are descriptions for movies.
Following \cite{xu2019deep}, we use it for co-clustering.

\end{document}